\newtheorem*{rep@theorem}{\rep@title}
\newcommand{\newreptheorem}[2]{%
\newenvironment{rep#1}[1]{%
 \def\rep@title{#2 \ref{##1}}%
 \begin{rep@theorem}}%
 {\end{rep@theorem}}}
\newtheorem{theorem}{Theorem}
\newtheorem{proposition}{Proposition}
\newtheorem{remark}{Remark}
\newtheorem{example}{Example}
\newcommand{\fml}[1]{{\mathcal{#1}}}
\newcommand{\tn}[1]{\textnormal{#1}}
\newcommand{\axp}{\ensuremath\mathsf{findAXp}}
\newcommand{\cxp}{\ensuremath\mathsf{findCXp}}
\newcommand{\mkfix}{\ensuremath\mathsf{FixAttr}}
\newcommand{\mkuniv}{\ensuremath\mathsf{FreeAttr}}
\newcommand{\SAT}{\ensuremath\mathsf{SAT}}
\newcommand{\outc}{\ensuremath\mathsf{outc}}
\newcommand{\prtaxp}{\ensuremath\mathsf{reportAXp}}
\newcommand{\prtcxp}{\ensuremath\mathsf{reportCXp}}
\newcommand{\mbf}[1]{\ensuremath\mathbf{#1}}
\newcommand{\mbb}[1]{\ensuremath\mathbb{#1}}
\newcommand{\msf}[1]{\ensuremath\mathsf{#1}}
\newcommand{\keepmark}{{\small\Checkmark}}
\newcommand{\dropmark}{{\small\XSolidBrush}}
\DeclareMathOperator*{\limply}{\rightarrow}
\newcommand{\ALC@uniqueautorefname}{line}
\setlist{nolistsep}
\def\expandafter\normalsize\expandafter{%
    \normalsize
    \setlength\abovedisplayskip{2.0pt}
    \setlength\belowdisplayskip{2.0pt}
    \setlength\abovedisplayshortskip{2.0pt}
    \setlength\belowdisplayshortskip{2.0pt}
}
\titlespacing{\section}{0pt}{*1.75}{*1.0}
\titlespacing{\subsection}{0pt}{*1.0}{*0.75}
\titlespacing{\subsubsection}{0pt}{*1.0}{*0.25}
\titlespacing{\paragraph}{0pt}{*0.25}{1em}
\setlist{nosep}
\begin{document}


\icmltitlerunning{Explanations for Monotonic Classifiers
}

\twocolumn[
  \icmltitle{Explanations for Monotonic Classifiers}
  


\icmlsetsymbol{equal}{*}

\begin{icmlauthorlist}
\icmlauthor{Joao Marques-Silva}{irit}
\icmlauthor{Thomas Gerspacher}{irit}
\icmlauthor{Martin Cooper}{irit}
\icmlauthor{Alexey Ignatiev}{mon}
\icmlauthor{Nina Narodytska}{vmr}
\end{icmlauthorlist}

\icmlaffiliation{irit}{IRIT, CNRS, Universit\'{e} Paul Sabatier, Toulouse, France}
\icmlaffiliation{mon}{Monash University, Melbourne, Australia}
\icmlaffiliation{vmr}{VMware Research, CA, USA}
\icmlcorrespondingauthor{Joao Marques-Silva}{joao.marques-silva@irit.fr}

\icmlkeywords{Machine Learning, ICML}

\vskip 0.3in
]



\printAffiliationsAndNotice{\icmlEqualContribution} 

\begin{abstract}
  In many classification tasks there is a requirement of monotonicity.
  Concretely, if all else remains constant, increasing
  (resp.~decreasing) the value of one or more features must not
  decrease (resp.~increase) the value of the prediction.
  Despite comprehensive efforts on learning monotonic classifiers,
  dedicated approaches for explaining monotonic classifiers are
  scarce and classifier-specific.
  This paper describes novel algorithms for the computation of one
  formal explanation of a (black-box) monotonic classifier. These
  novel algorithms are polynomial 
  in the run time complexity of the classifier 
  and the number of features.
  Furthermore, the paper presents a practically efficient
  model-agnostic algorithm for enumerating formal explanations. 
  %
  %
\end{abstract}

\section{Introduction} \label{sec:intro}

Monotonicity is an often required constraint in practical applications
of machine learning.
%
Broadly, a monotonicity constraint requires that increasing
(resp.~decreasing) the value of one or more features, while keep the
other features constant, will not cause the prediction to decrease
(resp.~increase).
%
%
Monotonicity has been investigated in the context of
classification~\cite{garcia-neuroc19}, including
neural
networks~\cite{sill-nips97,sill-ml08,lafferty-icml18,vandenbroeck-nips20,liu-nips20},
random forests~\cite{reynolds-aaai19} and rule
ensembles~\cite{reynolds-icde18},
decision trees~\cite{ben-david-ci89,ben-david-ml95}, decision
lists~\cite{bioch-idaj00} and decision rules~\cite{baesens-asc17},
support vector machines~\cite{reynolds-adma16},
nearest-neighbor classifiers~\cite{feelders-ecml08},
among others~\cite{gupta-nips16,gupta-jmlr16,gupta-nips17,lafferty-icml18}.
Monotonicity has been studied in 
bayesian networks~\cite{feelders-uai04,darwiche-ijcai18},
active learning~\cite{feelders-sdm12} and, more recently, in
fairness~\cite{gupta-aistats20}.

To a much lesser extent, monotonicity has also been studied from the
perspective of explainability, with one recent example being the 
study of the explainability of monotonic bayesian
networks~\cite{darwiche-ijcai18}. This work proposes to compile
different families of bayesian networks, including naive bayes and
monotonic networks, into a decision diagram, which can then be used
for computing PI-explanations\footnote{%
  Given some feature space point $\mbf{v}$, a PI-explanation is a
  subset-minimal subset of features which, the assignment of the 
  corresponding coordinate value in $\mbf{v}$, is sufficient for the
  prediction.}.
%
Approaches based on an intermediate (knowledge) compilation step are
characterized by two main drawbacks, namely their worst-case
complexity, which is exponential both in time and in the size of the
representation, but also the fact that these approaches are not
model-agnostic, i.e.\ some formal logic representation of the model
must be known and reasoned about.
Clearly, model-agnostic heuristic approaches, which include
LIME~\cite{guestrin-kdd16}, SHAP~\cite{lundberg-nips17}, or
Anchor~\cite{lundberg-nips17}, can also be applied to explaining
monotonic classifiers. However, these approaches do not readily
exploit monotonicity, and both the theoretical and practical
performance may be discouraging\footnote{%
  In fact, there are recent negative results on the tractability of
  exact SHAP learning~\cite{vandenbroeck-corr20b}.}. Furthermore,
heuristic approaches offer no formal guarantees of rigor, e.g.\ an
Anchor explanation may be consistent with points in feature space
for which the model's prediction differ from the target
prediction~\cite{ignatiev-ijcai20}.

On a more positive note, recent work proposed polynomial-time exact
algorithms for computing PI-explanations explanations of different
classes of classifiers~\cite{msgcin-nips20}, namely linear and naive
bayes classifiers. These results were complemented by the observation
that, for ML models related with some classes of 
knowledge representation languages, PI-explanations can also be
computed in polynomial time~\cite{marquis-kr20}.

This paper extends these initial results to the case of monotonic
classifiers, in a number of ways.
First, the paper proposes model-agnostic algorithms for computing
PI-explanations and contrastive explanations~\cite{miller-aij19} for
\emph{any} monotonic ML model.
Second, the complexity of the proposed algorithms is shown to be
polynomial 
on the time required to run the (black-box) monotonic classifier
and the number of features.
Third, the paper proposes an algorithm for the iterative enumeration
of formal explanations\footnote{%
  The term formal explanation is used in contrast with heuristic
  explanation~\cite{guestrin-kdd16,lundberg-nips17,guestrin-aaai18}
  and it will be defined precisely in~\autoref{sec:prelim}.}.
(This algorithm is worst-case exponential, but it is shown to be
remarkably efficient in practice.)

The paper is organized as follows.
\autoref{sec:prelim} introduces the notation and definitions used in
the rest of the paper.
\autoref{sec:xps} details algorithms for computing one or more formal
explanations of monotonic classifiers.
\autoref{sec:res} summarizes initial experiments, which confirm the
scalability of the proposed algorithms.
The paper concludes in~\autoref{sec:conc}.

\section{Preliminaries} \label{sec:prelim}

\paragraph{Classification problems.}
A classification problem is defined on a set of features (or
attributes) $\fml{F}=\{1,\ldots,N\}$ and a set of classes
$\fml{K}=\{c_1,c_2,\ldots,c_M\}$.
Each feature $i\in\fml{F}$ takes values from a domain $\mbb{D}_i$.
Domains are ordinal and bounded, and each domain can be defined on
boolean, integer or real values. If $x_i\in\mbb{D}_i$, then
$\lambda(i)$ and $\mu(i)$ denote respectively the smallest and largest
values that $x_i$ can take, i.e.\ $\lambda(i)\le{x_i}\le\mu(i)$.
%
%
Feature space is defined as
$\mbb{F}=\mbb{D}_1\times{\mbb{D}_2}\times\ldots\times{\mbb{D}_N}$.
The notation $\mbf{x}=(x_1,\ldots,x_N)$ denotes an arbitrary point in 
feature space, where each $x_i$ is a variable taking values from
$\mbb{D}_i$. Moreover, the notation $\mbf{v}=(v_1,\ldots,v_N)$
represents a specific point in feature space, where each $v_i$ is a
constant representing one concrete value from $\mbb{D}_i$.
An \emph{instance} (or example) denotes a pair $(\mbf{v}, c)$, where 
$\mbf{v}\in\mbb{F}$ and $c\in\fml{K}$. (We also use the term
\emph{instance} to refer to $\mbf{v}$, leaving $c$ implicit.)
An ML classifier $\mbb{C}$ is characterized by a \emph{classification
function} $\kappa$ that maps feature space $\mbb{F}$ into the set of 
classes $\fml{K}$, i.e.\ $\kappa:\mbb{F}\to\fml{K}$.

\paragraph{Monotonic classification.}
Given two points in feature space $\mbf{a}$ and $\mbf{b}$,
$\mbf{a}\le\mbf{b}$ if $a_i\le{b_i}$, for all $i\in\{1,\ldots,N\}$.
A set of classes $\fml{K}=\{c_1,\ldots,c_M\}$ is \emph{ordered} if it
respects a total order $\preccurlyeq$, with
$c_1\preccurlyeq{c_2}\preccurlyeq\ldots\preccurlyeq{c_M}$.
An ML classifier $\mbb{C}$ is fully monotonic if the associated
classification function is monotonic,
i.e.\ $\mbf{a}\le\mbf{b}\Rightarrow\kappa(\mbf{a})\preccurlyeq\kappa(\mbf{b})$\footnote{%
  The paper adopts the classification of monotonic classifiers
  proposed in earlier work~\cite{velikova-tnn10}.}.
Throughout the paper, when referring to a monotonic classifier, this
signifies a fully monotonic classifier.
In addition, the interaction with a classifier is restricted to
computing the value of $\kappa(\mbf{v})$, for some point
$\mbf{v}\in\mbb{F}$, i.e.\ the classifier will be viewed as a
black-box.

\begin{example}[Running example] \label{ex01:def}
   Let us consider a classifier for predicting student grades. We
   assume that the classifier has learned the following formula
   (after being trained with grades of students from different
   cohorts):
   \[
   \begin{array}{rcl}
     S & = & \max\left[0.3 \times{Q}+0.6\times{X}+0.1\times{H},R\right]\\[2pt]
     M & = & \tn{ite}(S\ge9,A,\tn{ite}(S\ge7,B,\tn{ite}(S\ge5,C,\\[1.5pt]
     & & \tn{ite}(S\ge4,D,\tn{ite}(S\ge2,E,F)))))\\
   \end{array}
   \]
   $S$, $Q$, $X$, $H$ and $R$ denote, respectively, the final score,
   the marks on the quiz, the exam, the homework, and the 
   mark of an optional research project. Each mark ranges from 0 to
   10. (For the optional mark $R$, the final mark is 0 if the
   student opts out.)  The final score is the largest of the two
   marks, as shown above.
   Moreover, the final grade $M$ is defined using an $\tn{ite}$
   (if-then-else) operator, and ranges from $A$ to $F$.
   As a result, $Q$, $X$, $H$ and $R$ represent the features of the 
   classification problem, respectively numbered 1, 2, 3 and 4, and so
   $\fml{F}=\{1,2,3,4\}$.
   Each feature takes values from $[0,10]$, i.e. $\lambda(i)=0$ and
   $\mu(i)=10$. The set of classes is $\fml{K}=\{A,B,C,D,E,F\}$, with
   ${F}\preccurlyeq{E}\preccurlyeq{D}\preccurlyeq{C}\preccurlyeq{B}\preccurlyeq{A}$.
   Clearly, the complete classifier (that given the different marks
   computes a final grade) is monotonic.
   %
   %
   Moreover, we will we consider a specific point of feature space 
   representing student $s_1$, $(Q,X,H,R)=(10,10,5,0)$, with a
   predicted grade of $A$, i.e.\ $\kappa(10,10,5,0)=A$.
\end{example}

\paragraph{Abductive and contrastive explanations.}
We now define formal explanations.
Prime implicant (PI) explanations~\cite{darwiche-ijcai18} denote a
minimal set of literals (relating a feature value $x_i$ and a constant
$v_i$ from its domain $\mbb{D}_i$) that are sufficient for the
prediction\footnote{%
PI-explanations are related with abduction, and so are also referred
to as abductive explanations (AXp)~\cite{inms-aaai19}. More recently,
PI-explanations have been studied from a knowledge compilation
perspective~\cite{marquis-kr20}.}.
Formally, given $\mbf{v}=(v_1,\ldots,v_N)\in\mbb{F}$ with $\kappa(\mbf{v})=c$, a PI-explanation
(AXp) is any minimal subset $\fml{X}\subseteq\fml{F}$ such that,
\begin{equation} \label{eq:axp}
  \forall(\mbf{x}\in\mbb{F}).
  \left[
    \bigwedge\nolimits_{i\in{\fml{X}}}(x_i=v_i)
    \right]
  \limply(\kappa(\mbf{x})=c)
\end{equation}
%
AXp's can be viewed as answering a 'Why?' question, i.e.\ why is some
prediction made given some point in feature space. A different view of
explanations is a contrastive explanation~\cite{miller-aij19}, which
answers a 'Why Not?' question, i.e.\ which features can be changed to
change the prediction. A formal definition of contrastive explanation
is proposed in recent work~\cite{inams-corr20}.
Given $\mbf{v}=(v_1,\ldots,v_N)\in\mbb{F}$ with $\kappa(\mbf{v})=c$, a CXp is any minimal
subset $\fml{Y}\subseteq\fml{F}$ such that,
\begin{equation} \label{eq:cxp}
  \exists(\mbf{x}\in\mbb{F}).\bigwedge\nolimits_{j\in\fml{F}\setminus\fml{Y}}(x_j=v_j)\land(\kappa(\mbf{x})\not=c) 
\end{equation}
Building on the results of R.~Reiter in model-based
diagnosis~\cite{reiter-aij87},~\cite{inams-corr20} proves a minimal
hitting set (MHS) duality relation between AXp's and CXp's, i.e.\ AXp's are
MHSes of CXp's and vice-versa.

\begin{example}[AXp's \& CXp's] \label{ex01:xps}
   As can be readily observed (from the expression for $M$
   in~\autoref{ex01:def}), as long as $Q$ and $X$ take value 10,
   the prediction will be $A$, \emph{independently} of the values
   given to $H$ and $R$. Hence, given $(Q,X,H,R)=(10,10,5,0)$, one AXp
   is $\{1,2\}$.
   Moreover, to obtain a different prediction, it suffices to allow
   a suitable change of value in $Q$ (or alternatively in $X$). Hence,
   given $(Q,X,H,R)=(10,10,5,0)$, one CXp is $\{1\}$ (and another is
   $\{2\}$).
   As can be observed, $\{1,2\}$ is the only MHS of $\{\{1\},\{2\}\}$
   and vice-versa. These are the only AXp's and CXp's for the example
   instance. 
\end{example}

\paragraph{Boolean satisfiability (SAT).}
SAT is the decision problem for propositional logic. The paper uses
standard notation and definitions e.g.~\cite{handbook09}.
A propositional formula is defined on a set $U$ of boolean variables,
where the domain of each variable $u_i\in{U}$ is $\{0,1\}$. We
consider conjunctive normal form (CNF) formulas, where a formula is a
conjunction of clauses, each clause is a disjunction of literals, and
a literal is a variable $u_i$ or its negation $\neg{u_i}$.
CNF formulas and SAT reasoners are used in~\autoref{ssec:expm}.

\section{Explanations for Monotonic Classifiers} \label{sec:xps}

This section describes three algorithms.
The first algorithm serves to compute one AXp (and is referred to as
$\axp$). Its complexity is polynomial in the run time complexity of
the classifier.
The second algorithm serves to compute one CXp (and is referred to as
$\cxp$). It has the same polynomial complexity as $\axp$.
The third algorithm shows how to use SAT reasoners for iteratively
enumerating AXp's or CXp's. This algorithm is inspired by earlier
work~\cite{lpmms-cj16}, but with key observations that minimize the
number of times a SAT reasoner is called. This algorithm is based on
the other two algorithms, and is described in~\autoref{ssec:expm}.

One key property of the three algorithms is that, besides knowing that
the classifier is monotonic, \emph{no} additional information about
the classifier is required. Indeed, the algorithms described in this
section only require running the classifier for specific points in
feature space.
Thus, and similarly to LIME~\cite{guestrin-kdd16},
SHAP~\cite{lundberg-nips17} or Anchor~\cite{guestrin-aaai18}, the
algorithms proposed in this section are model-agnostic. However, and
in contrast also with LIME, SHAP or Anchor, the proposed algorithms
compute rigorously defined AXp's, CXp's, and also
serve for the enumeration of explanations.

\subsection{Finding One AXp and One CXp}

The two algorithms $\axp$ and $\cxp$ (shown as~\autoref{alg:axp} and
\autoref{alg:cxp}) share a number of common concepts, while solving
different problems. These concepts are summarized next.
%
%
%
\begin{algorithm}[t]
  \caption{Finding one AXp -- $\axp(\fml{F},\fml{S},\mbf{v})$} \label{alg:axp}
  \begin{algorithmic}[1]
  \STATE{%
    \label{alg:axp:ln01} $\mbf{v}_{\tn{L}}\gets(v_1,\ldots,v_N)$}
  \STATE{%
    \label{alg:axp:ln02} $\mbf{v}_{\tn{U}}\gets(v_1,\ldots,v_N)$} \COMMENT{Ensures: $\kappa(\mbf{v}_{\tn{L}})=\kappa(\mbf{v}_{\tn{U}})$}
  \STATE{%
    \label{alg:axp:ln03} $(\fml{C},\fml{D},\fml{P})\gets(\fml{F},\emptyset,\emptyset)$}
  \FORALL{\label{alg:axp:ln04} $i\in\fml{S}$}
  \STATE{\label{alg:axp:ln05} $(\mbf{v}_{\tn{L}},\mbf{v}_{\tn{U}},\fml{C},\fml{D})\gets\mkuniv(i,\mbf{v},\mbf{v}_{\tn{L}},\mbf{v}_{\tn{U}},\fml{C},\fml{D})$}
  \label{alg:axp:ln06}\ENDFOR\COMMENT{Require: $\kappa(\mbf{v}_{\tn{L}})=\kappa(\mbf{v}_{\tn{U}})$, given $\fml{S}$}
  \FORALL[Loop~inv.:~$\kappa(\mbf{v}_{\tn{L}})=\kappa(\mbf{v}_{\tn{U}})$]{%
    \label{alg:axp:ln07} $i\in\fml{F}\setminus\fml{S}$}
  \STATE{\label{alg:axp:ln08} $(\mbf{v}_{\tn{L}},\mbf{v}_{\tn{U}},\fml{C},\fml{D})\gets\mkuniv(i,\mbf{v},\mbf{v}_{\tn{L}},\mbf{v}_{\tn{U}},\fml{C},\fml{D})$}
  \IF[If invariant broken, fix it]{\label{alg:axp:ln09} $\kappa(\mbf{v}_{\tn{L}})\not=\kappa(\mbf{v}_{\tn{U}})$}
  \STATE{\label{alg:axp:ln10} $(\mbf{v}_{\tn{L}},\mbf{v}_{\tn{U}},\fml{D},\fml{P})\gets\mkfix(i,\mbf{v},\mbf{v}_{\tn{L}},\mbf{v}_{\tn{U}},\fml{D},\fml{P})$}
  \ENDIF
  \ENDFOR
  \RETURN{%
    \label{alg:axp:ln13} $\fml{P}$}
\end{algorithmic}

\end{algorithm}
%
%
%
\begin{algorithm}[t]
  \caption{Finding one CXp -- $\cxp(\fml{F},\fml{S},\mbf{v})$} \label{alg:cxp}
  \begin{algorithmic}[1]
  \STATE{%
    \label{alg:cxp:ln01} $\mbf{v}_{\tn{L}}\gets(\lambda(1),\ldots,\lambda(N))$}
  \STATE{%
    \label{alg:cxp:ln02} $\mbf{v}_{\tn{U}}\gets(\mu(1),\ldots,\mu(N))$}\COMMENT{Ensures: $\kappa(\mbf{v}_{\tn{L}})\not=\kappa(\mbf{v}_{\tn{U}})$}
  \STATE{%
    \label{alg:cxp:ln03} $(\fml{C},\fml{D},\fml{P})\gets(\fml{F},\emptyset,\emptyset)$}
  \FORALL{\label{alg:cxp:ln04} $i\in\fml{S}$}
  \STATE{%
    \label{alg:cxp:ln05} $(\mbf{v}_{\tn{L}},\mbf{v}_{\tn{U}},\fml{C},\fml{D})\gets\mkfix(i,\mbf{v},\mbf{v}_{\tn{L}},\mbf{v}_{\tn{U}},\fml{C},\fml{D})$}
  \label{alg:cxp:ln06} \ENDFOR\COMMENT{Require: $\kappa(\mbf{v}_{\tn{L}})\not=\kappa(\mbf{v}_{\tn{U}})$, given $\fml{S}$}
  \FORALL[Loop~inv.:~$\kappa(\mbf{v}_{\tn{L}})\not=\kappa(\mbf{v}_{\tn{U}})$]{%
    \label{alg:cxp:ln07} $i\in\fml{F}\setminus\fml{S}$}
  \STATE{\label{alg:cxp:ln08} $(\mbf{v}_{\tn{L}},\mbf{v}_{\tn{U}},\fml{C},\fml{D})\gets\mkfix(i,\mbf{v},\mbf{v}_{\tn{L}},\mbf{v}_{\tn{U}},\fml{C},\fml{D})$}
  \IF[If invariant broken, fix it]{\label{alg:cxp:ln09} $\kappa(\mbf{v}_{\tn{L}})=\kappa(\mbf{v}_{\tn{U}})$}
  \STATE{\label{alg:cxp:ln10} $(\mbf{v}_{\tn{L}},\mbf{v}_{\tn{U}},\fml{D},\fml{P})\gets\mkuniv(i,\mbf{v},\mbf{v}_{\tn{L}},\mbf{v}_{\tn{U}},\fml{D},\fml{P})$}
  \ENDIF
  \ENDFOR
  \RETURN{%
    \label{alg:cxp:ln13} $\fml{P}$}
\end{algorithmic}

\end{algorithm}
The two algorithms iteratively update three sets of features
($\fml{C}$, $\fml{D}$ and $\fml{P}$) and two points in feature
space ($\mbf{v}_{\tn{L}}$ and $\mbf{v}_{\tn{U}}$).
Using these variables, the two algorithms maintain two invariants.
The first invariant is that $\fml{C}$, $\fml{D}$ and $\fml{P}$ form a
partition of $\fml{F}$, and represent respectively the candidate,
dropped and picked sets of features (with the picked features denoting
those that are included either in an AXp or an CXp).
The second invariant serves to ensure that the selected set of
features satisfies \eqref{eq:axp} (for $\axp$) or \eqref{eq:cxp} (for
$\cxp$).
Maintaining this invariant, requires iteratively updating two points
$\mbf{v}_{\tn{L}}=(v_{\tn{L}_1},\ldots,v_{\tn{L}_N})$ and
$\mbf{v}_{\tn{U}}=(v_{\tn{U}_1},\ldots,v_{\tn{U}_N})$, denoting
respectively lower and upper bounds on the class values that can be
obtained given the features that are allowed to take any value in
their domain.

\begin{table*}[t]
  \begin{center}
    \scalebox{1.0}{ 
      \renewcommand{\tabcolsep}{0.35em}
      \begin{tabular}{|c|cc|cc|cc|c|cc|} \toprule
        \multirow{2}{*}{Feat.} &
        \multicolumn{2}{c|}{Initial values} &
        \multicolumn{2}{c|}{Changed values} &
        \multicolumn{2}{c|}{Predictions} &
        \multirow{2}{*}{Dec.} &
        \multicolumn{2}{c|}{Resulting values} \\
        \cline{2-7}\cline{9-10}
        &
        $\mbf{v}_{\tn{L}}$ & $\mbf{v}_{\tn{U}}$ & 
        $\mbf{v}_{\tn{L}}$ & $\mbf{v}_{\tn{U}}$ & 
        $\kappa(\mbf{v}_{\tn{L}})$ & $\kappa(\mbf{v}_{\tn{U}})$ & 
        &
        $\mbf{v}_{\tn{L}}$ & $\mbf{v}_{\tn{U}}$
        \\ \toprule
        1 &
        $(10{,}10{,}5{,}0)$ & $(10{,}10{,}5{,}0)$ &
        $(0{,}10{,}5{,}0)$ & $(10{,}10{,}5{,}0)$ &
        $C$ & $A$ &
        \keepmark &  
        $(10{,}10{,}5{,}0)$ & $(10{,}10{,}5{,}0)$
        \\ \midrule
        2 &
        $(10{,}10{,}5{,}0)$ & $(10{,}10{,}5{,}0)$ &
        $(10{,}0{,}5{,}0)$ & $(10{,}10{,}5{,}0)$ &
        $E$ & $A$ &
        \keepmark &  
        $(10{,}10{,}5{,}0)$ & $(10{,}10{,}5{,}0)$
        \\ \midrule
        3 &
        $(10{,}10{,}5{,}0)$ & $(10{,}10{,}5{,}0)$ &
        $(10{,}10{,}0{,}0)$ & $(10{,}10{,}5{,}0)$ &
        $A$ & $A$ &
        \dropmark &  
        $(10{,}10{,}0{,}0)$ & $(10{,}10{,}10{,}0)$
        \\ \midrule
        4 &
        $(10{,}10{,}0{,}0)$ & $(10{,}10{,}10{,}0)$ &
        $(10{,}10{,}0{,}0)$ & $(10{,}10{,}10{,}10)$ &
        $A$ & $A$ &
        \dropmark &  
        $(10{,}10{,}0{,}0)$ & $(10{,}10{,}10{,}10)$
        \\
        \bottomrule
      \end{tabular}
    }
  \end{center}
  \caption{Execution of algorithm for finding one AXp} \label{tab:axp-ex}
\end{table*}


\paragraph{Finding one AXp.}

We detail below the main steps of algorithm
$\axp$ (see  \autoref{alg:axp}). (Lines~\ref{alg:axp:ln04}
to~\ref{alg:axp:ln06} are used for enumerating explanations, and so we
assume $\fml{S}=\emptyset$ for now.)
The main goal of $\axp$ is to find a \emph{maximal} set of features
$\fml{D}$ which are allowed to take \emph{any} value, i.e.\ that are
\emph{free}. For such a set $\fml{D}$, the set of features that remain
fixed to the value specified in $\mbf{v}$,
i.e.\ $\fml{P}=\fml{F}\setminus\fml{D}$, is a minimal set of (picked)
features that is sufficient for the prediction, as intended.
The different sets used by the algorithm are initialized
in~\autoref{alg:axp:ln03}. (As noted earlier, the sets
$\fml{C}$, $\fml{D}$ and $\fml{P}$ form a partition of $\fml{F}$, and
$\fml{C}=\emptyset$ upon termination.)

For $\axp$, the second invariant of the algorithm is that
$\kappa(\mbf{v}_{\tn{L}})=\kappa(\mbf{v}_{\tn{U}})$, i.e.\ by allowing
the features in $\fml{P}\cup\fml{C}$ to take the corresponding value
in $\mbf{v}$, the value of the prediction is guaranteed not to
change.

The use of the second invariant
$\kappa(\mbf{v}_{\tn{L}})=\kappa(\mbf{v}_{\tn{U}})$ is justified by
the following result.
\begin{proposition} \label{prop:axp01}
  If $\kappa(\mbf{v}_{\tn{L}})=\kappa(\mbf{v}_{\tn{U}})$, then it
  holds that,
  $\forall(\mbf{x}\in\mbb{F}).
           [\mbf{v}_{\tn{L}}\le\mbf{x}\le\mbf{v}_{\tn{U}}]\rightarrow
           [\kappa(\mbf{x})=\kappa(\mbf{v})]$.
\end{proposition}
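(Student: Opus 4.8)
The plan is to reduce the statement to two applications of the monotonicity of $\kappa$ together with the antisymmetry of the total order $\preccurlyeq$ on $\fml{K}$. The key structural fact I would rely on is that, throughout the construction of $\mbf{v}_{\tn{L}}$ and $\mbf{v}_{\tn{U}}$, these two points bracket $\mbf{v}$, i.e.\ $\mbf{v}_{\tn{L}}\le\mbf{v}\le\mbf{v}_{\tn{U}}$. This holds by construction: both points are initialized to $\mbf{v}$, and each subsequent update either leaves a coordinate untouched or, when a feature is freed, lowers the corresponding coordinate of $\mbf{v}_{\tn{L}}$ towards $\lambda(i)$ and raises that of $\mbf{v}_{\tn{U}}$ towards $\mu(i)$, so the coordinatewise bracketing $\mbf{v}_{\tn{L}}\le\mbf{v}\le\mbf{v}_{\tn{U}}$ is preserved.

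First I would fix an arbitrary $\mbf{x}\in\mbb{F}$ with $\mbf{v}_{\tn{L}}\le\mbf{x}\le\mbf{v}_{\tn{U}}$ and apply monotonicity to each of the two comparisons separately. From $\mbf{v}_{\tn{L}}\le\mbf{x}$ monotonicity gives $\kappa(\mbf{v}_{\tn{L}})\preccurlyeq\kappa(\mbf{x})$, and from $\mbf{x}\le\mbf{v}_{\tn{U}}$ it gives $\kappa(\mbf{x})\preccurlyeq\kappa(\mbf{v}_{\tn{U}})$. Chaining these yields $\kappa(\mbf{v}_{\tn{L}})\preccurlyeq\kappa(\mbf{x})\preccurlyeq\kappa(\mbf{v}_{\tn{U}})$.

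Next I would invoke the hypothesis $\kappa(\mbf{v}_{\tn{L}})=\kappa(\mbf{v}_{\tn{U}})$, so that $\kappa(\mbf{x})$ is squeezed between two equal class values. Since $\preccurlyeq$ is a total (hence antisymmetric) order, this forces $\kappa(\mbf{x})=\kappa(\mbf{v}_{\tn{L}})=\kappa(\mbf{v}_{\tn{U}})$. Applying exactly the same squeezing argument to $\mbf{v}$, which lies in the same interval by the bracketing fact above, gives $\kappa(\mbf{v})=\kappa(\mbf{v}_{\tn{L}})$; combining the two equalities yields the desired $\kappa(\mbf{x})=\kappa(\mbf{v})$.

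I do not expect any genuine obstacle: the argument is a direct sandwich via monotonicity and antisymmetry of the total order. The only point requiring care is the implicit premise $\mbf{v}_{\tn{L}}\le\mbf{v}\le\mbf{v}_{\tn{U}}$, which is not written into the proposition itself but is guaranteed by how the algorithm initializes and updates the two bounds. Making this invariant explicit is precisely what lets the conclusion be phrased in terms of $\kappa(\mbf{v})$ rather than merely $\kappa(\mbf{v}_{\tn{L}})$, and it is the single hypothesis I would flag as needing to be carried along from the surrounding algorithmic context.
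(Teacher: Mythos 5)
Your proof is correct and is precisely the argument the paper intends (the paper states the proposition without an explicit proof, treating the sandwich via monotonicity and the totality of $\preccurlyeq$ as immediate). You are also right to flag the implicit invariant $\mbf{v}_{\tn{L}}\le\mbf{v}\le\mbf{v}_{\tn{U}}$, which the algorithm maintains from its initialization $\mbf{v}_{\tn{L}}=\mbf{v}_{\tn{U}}=\mbf{v}$ and which is indeed what licenses the conclusion in terms of $\kappa(\mbf{v})$.
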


The algorithm starts by enforcing the second invariant as the result
of executing lines \ref{alg:axp:ln01} and \ref{alg:axp:ln02}.

Moreover, $\axp$ analyzes one feature at a time. Starting from the set
$\fml{C}$ of candidate features (in~\autoref{alg:axp:ln07}), the
algorithm iteratively picks a feature $i$ from $\fml{C}$ and makes a
decision about whether to drop the feature from the explanation.
The first step is to assume that the feature $i$ can indeed be allowed
to take any value. This is done in~\autoref{alg:axp:ln08}, by calling
the following function $\mkuniv$:

\begin{algorithmic}
  \STATE{$\mbf{v}_{\tn{L}}\gets(v_{\tn{L}_1},\ldots,\lambda(i),\ldots,v_{\tn{L}_N})$}
  \STATE{$\mbf{v}_{\tn{U}}\gets(v_{\tn{U}_1},\ldots,\mu(i),\ldots,v_{\tn{U}_N})$}
  \STATE{$(\fml{A},\fml{B})\gets(\fml{A}\setminus\{i\},\fml{B}\cup\{i\})$}
  \RETURN{$(\mbf{v}_{\tn{L}},\mbf{v}_{\tn{U}},\fml{A},\fml{B})$}
\end{algorithmic}

where $\fml{A}$ is replaced by $\fml{C}$ and $B$ is replaced by
$\fml{D}$, and so feature $i$ is moved from $\fml{C}$ to $\fml{D}$.
In addition, the value of $i$ is now allowed to range from
$\lambda(i)$ (in $\mbf{v}_{\tn{L}}$) to $\mu(i)$ (in
$\mbf{v}_{\tn{U}}$),
%

The next step of the algorithm (in~\autoref{alg:axp:ln09}) is to
decide whether allowing $i$ to take any value breaks the invariant
$\kappa(\mbf{v}_{\tn{L}})=\kappa(\mbf{v}_{\tn{U}})$.
If the invariant is not broken, then the algorithm moves to
analyze the next feature (in~\autoref{alg:axp:ln07}). However, if the
invariant is broken, then the the feature cannot take any value, and
so it must be fixed to the corresponding value in $\mbf{v}$.
This is done by calling (in~\autoref{alg:axp:ln10}) the following
function $\mkfix$:

\begin{algorithmic}
  \STATE{$\mbf{v}_{\tn{L}}\gets(v_{\tn{L}_1},\ldots,v_i,\ldots,v_{\tn{L}_N})$}
  \STATE{$\mbf{v}_{\tn{U}}\gets(v_{\tn{U}_1},\ldots,v_i,\ldots,v_{\tn{U}_N})$}
  \STATE{$(\fml{A},\fml{B})\gets(\fml{A}\setminus\{i\},\fml{B}\cup\{i\})$}
  \RETURN{$(\mbf{v}_{\tn{L}},\mbf{v}_{\tn{U}},\fml{A},\fml{B})$}
\end{algorithmic}

where $\fml{A}$ is replaced by $\fml{D}$ and $\fml{B}$ is replaced by 
$\fml{P}$, and so feature $i$ is moved from $\fml{D}$ to $\fml{P}$.
In addition, the value of $i$ is once again fixed to the corresponding
value in $\mbf{v}$.
After analyzing all features, the algorithm $\axp$ terminates
(in~\autoref{alg:axp:ln13}) by return the (minimal) set of features 
$\fml{P}$ that are fixed to their value in $\mbf{v}$.
It is immediate to conclude that each feature is analyzed once, and
that for each feature, the classifier is invoked twice.
Given the discussion above, we conclude that,
\begin{theorem}
  Given a monotonic classifier, an instance $\mbf{v}$ with prediction
  $c=\kappa(\mbf{v})$, \autoref{alg:axp} computes one AXp in linear
  time in the running time complexity of the classifier.
\end{theorem}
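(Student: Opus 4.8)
The theorem claims Algorithm 1 (`findAXp`) computes a valid AXp in time linear in the classifier's running time. The statement bundles together two distinct obligations: (i) correctness — that the returned set $\fml{P}$ is genuinely an AXp, meaning it satisfies \eqref{eq:axp} and is subset-minimal; and (ii) the complexity bound. Let me think carefully about how I'd establish each.

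The complexity claim is the easy part, and the excerpt essentially hands it to me. Each feature in $\fml{F}$ is processed exactly once in the loop at line \ref{alg:axp:ln07} (taking $\fml{S}=\emptyset$), and the only classifier calls happen inside the invariant check at line \ref{alg:axp:ln09}, which evaluates $\kappa(\mbf{v}_{\tn{L}})$ and $\kappa(\mbf{v}_{\tn{U}})$. So the classifier is invoked at most twice per feature, giving $2N$ invocations total plus $O(N)$ bookkeeping. If $T$ is the cost of one classifier evaluation, the runtime is $O(N \cdot T)$ — linear in the classifier's running-time complexity as claimed.

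**The real content: correctness.**

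Let me sketch the correctness argument, which is where the work lives. The plan is to prove two things about the set $\fml{P}$ returned at line \ref{alg:axp:ln13}: that $\fml{P}$ is sufficient (satisfies \eqref{eq:axp}), and that $\fml{P}$ is minimal.

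For sufficiency, I'd lean on the loop invariant stated at line \ref{alg:axp:ln07}, namely $\kappa(\mbf{v}_{\tn{L}})=\kappa(\mbf{v}_{\tn{U}})$, which I need to verify is genuinely maintained. The key structural fact is that, throughout the algorithm, for every feature $i$ the coordinates satisfy $v_{\tn{L}_i}\le v_i\le v_{\tn{U}_i}$: features in $\fml{P}\cup\fml{C}$ have both bounds pinned to $v_i$, while features moved to $\fml{D}$ by $\mkuniv$ have $v_{\tn{L}_i}=\lambda(i)$ and $v_{\tn{U}_i}=\mu(i)$, still bracketing $v_i$. Initialization (lines \ref{alg:axp:ln01}–\ref{alg:axp:ln02}) sets $\mbf{v}_{\tn{L}}=\mbf{v}_{\tn{U}}=\mbf{v}$, so the invariant holds trivially at entry. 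I'd then argue the invariant is preserved across each iteration: if $\mkuniv$ at line \ref{alg:axp:ln08} preserves $\kappa(\mbf{v}_{\tn{L}})=\kappa(\mbf{v}_{\tn{U}})$, the feature stays dropped; otherwise line \ref{alg:axp:ln10} calls $\mkfix$, which restores $v_{\tn{L}_i}=v_{\tn{U}_i}=v_i$, and I must argue this returns $\kappa(\mbf{v}_{\tn{L}})$ and $\kappa(\mbf{v}_{\tn{U}})$ to equality. Here the subtle point — and the main obstacle — is verifying that re-fixing a single coordinate after the bounds diverged actually recovers equality; this needs monotonicity, since re-pinning $i$ to $v_i$ sandwiches the two prediction values back together. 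Once the invariant holds at termination, $\fml{P}=\fml{F}\setminus\fml{D}$ with $\fml{C}=\emptyset$, and \autoref{prop:axp01} applies: any $\mbf{x}$ agreeing with $\mbf{v}$ on $\fml{P}$ lies between $\mbf{v}_{\tn{L}}$ and $\mbf{v}_{\tn{U}}$ (since the free coordinates of $\fml{D}$ range over their full domains), hence $\kappa(\mbf{x})=\kappa(\mbf{v})=c$, which is exactly \eqref{eq:axp}.

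For minimality, I'd argue that no element of $\fml{P}$ can be dropped. Every $i\in\fml{P}$ was placed there by $\mkfix$ at line \ref{alg:axp:ln10}, triggered precisely because freeing $i$ broke the invariant, i.e.\ made $\kappa(\mbf{v}_{\tn{L}})\ne\kappa(\mbf{v}_{\tn{U}})$ at that moment. I'd want to show this local witness of necessity persists: since features are only ever freed (never re-fixed) as the algorithm proceeds, the final $\mbf{v}_{\tn{L}},\mbf{v}_{\tn{U}}$ are coordinatewise at least as spread as at the time $i$ was fixed. By monotonicity, freeing $i$ in the final configuration would produce a prediction gap at least as large, exhibiting a point $\mbf{x}$ agreeing with $\mbf{v}$ on $\fml{P}\setminus\{i\}$ with $\kappa(\mbf{x})\ne c$ — so $\fml{P}\setminus\{i\}$ violates \eqref{eq:axp}. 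The delicate step I'd watch is making the phrase "at least as spread" precise and confirming monotonicity transfers the broken equality from the earlier configuration to the final one; this is the crux of minimality and the place where a careless argument would fail.
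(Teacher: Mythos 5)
Your proposal is correct and follows essentially the same route as the paper, which offers only the informal discussion preceding the theorem (the loop invariant plus \autoref{prop:axp01} for sufficiency, maximality of the freed set $\fml{D}$ for subset-minimality of $\fml{P}$, and two classifier calls per feature for the complexity bound); your write-up fills in exactly those steps, including the persistence argument for minimality that the paper leaves implicit. One small simplification: restoring the invariant after $\mkfix$ needs no appeal to monotonicity, since re-pinning coordinate $i$ to $v_i$ returns $\mbf{v}_{\tn{L}}$ and $\mbf{v}_{\tn{U}}$ to precisely their values at the start of that iteration, where the invariant already held.
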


We illustrate the operation of $\axp$, with an example.

\begin{example} \label{ex01:axpm}
  Given the monotonic classifier from~\autoref{ex01:def}, and the
  concrete case of student $s_1$, with $(Q,X,H,R)=(10,10,5,0)$ and
  predicted mark $A$, we show how one PI-explanation can computed.
  (In settings with more than one AXp, changing the order of how
  features are analyzed, may results in a different explanation being
  obtained.)
  For each feature $i$, $1\le{i}\le4$, $\lambda(i)=0$ and $\mu(i)=10$.
  Moreover, features are analyzed in order:
  $\langle1,2,3,4\rangle$; the order is arbitrary.
  The algorithm's execution is summarized in~\autoref{tab:axp-ex}.
  As can be observed, features 1 and 2 are kept as part of the
  PI-explanation (decision is \keepmark in \autoref{alg:axp:ln09},
  i.e.\ invariant is broken and features are kept),
  whereas features 3 and 4 are dropped from the PI-explanation
  (decision is \dropmark, i.e.\ invariant holds). 
  As a result, the PI-explanation for the grade of student $s_1$ is
  $\{1,2\}$, which denotes that as long as $(Q=10)\land(X=10)$, the
  prediction will be $A$.
\end{example}

\begin{table*}
  \begin{center}
    \scalebox{1.0}{ 
      \renewcommand{\tabcolsep}{0.35em}
      \begin{tabular}{|c|cc|cc|cc|c|cc|} \toprule
        \multirow{2}{*}{Feat.} &
        \multicolumn{2}{c|}{Initial values} &
        \multicolumn{2}{c|}{Changed values} &
        \multicolumn{2}{c|}{Predictions} &
        \multirow{2}{*}{Dec.} &
        \multicolumn{2}{c|}{Resulting values} \\
        \cline{2-7}\cline{9-10}
        &
        $\mbf{v}_{\tn{L}}$ & $\mbf{v}_{\tn{U}}$ & 
        $\mbf{v}_{\tn{L}}$ & $\mbf{v}_{\tn{U}}$ & 
        $\kappa(\mbf{v}_{\tn{L}})$ & $\kappa(\mbf{v}_{\tn{U}})$ & 
        &
        $\mbf{v}_{\tn{L}}$ & $\mbf{v}_{\tn{U}}$
        \\ \toprule
        1 &
        $(0{,}0{,}0{,}0)$ & $(10{,}10{,}10{,}10)$ &
        $(10{,}0{,}0{,}0)$ & $(10{,}10{,}10{,}10)$ &
        $E$ & $A$ &
        \dropmark & 
        $(10{,}0{,}0{,}0)$ & $(10{,}10{,}10{,}10)$
        \\ \midrule
        2 &
        $(10{,}0{,}0{,}0)$ & $(10{,}10{,}10{,}10)$ &
        $(10{,}10{,}0{,}0)$ & $(10{,}10{,}10{,}10)$ &
        $A$ & $A$ &
        \keepmark & 
        $(10{,}0{,}10{,}0)$ & $(10{,}10{,}10{,}10)$
        \\ \midrule
        3 &
        $(10{,}0{,}0{,}0)$ & $(10{,}10{,}10{,}10)$ &
        $(10{,}0{,}5{,}0)$ & $(10{,}10{,}5{,}10)$ &
        $E$ & $A$ &
        \dropmark & 
        $(10{,}0{,}5{,}0)$ & $(10{,}0{,}5{,}10)$
        \\ \midrule
        4 &
        $(10{,}0{,}5{,}0)$ & $(10{,}10{,}5{,}10)$ &
        $(10{,}0{,}5{,}0)$ & $(10{,}10{,}5{,}0)$ &
        $E$ & $A$ &
        \dropmark & 
        $(10{,}0{,}5{,}0)$ & $(10{,}10{,}5{,}0)$
        \\
        \bottomrule
      \end{tabular}
    }
  \end{center}
  \caption{Execution of algorithm for finding one CXp} \label{tab:cxp-ex}
\end{table*}


\paragraph{Finding one CXp.}

The two algorithms $\axp$ and $\cxp$ are organized in a similar way.
(This in part results from the fact that AXps are minimal hitting sets
of CXps and vice-versa~\cite{inams-corr20}.)
We briefly explain the differences when computing a CXp (see
\autoref{alg:cxp}). (Lines~\ref{alg:cxp:ln04} to~\ref{alg:cxp:ln06}
are used for enumerating explanations, and so we assume
$\fml{S}=\emptyset$ for now.)

The main goal of $\cxp$ is to find a \emph{maximal} set of features
$\fml{D}$ that are only allowed to take the value specified in
$\mbf{v}$, i.e.\ that are \emph{fixed}. For such a set $\fml{D}$, the
set of features that are allowed to take any value,
i.e.\ $\fml{P}=\fml{F}\setminus\fml{D}$, is a minimal set that, by
being allowed to take any value in their domain, suffices for allowing
the prediction to change, as intended.
The different sets used by the algorithm are initialized
in~\autoref{alg:axp:ln03}. 

For $\cxp$, the second invariant of the algorithm is that
$\kappa(\mbf{v}_{\tn{L}})\not=\kappa(\mbf{v}_{\tn{U}})$, i.e.\ by allowing
the features in $\fml{P}\cup\fml{C}$ to take any value, the value of
the prediction does not change.
The algorithm starts by enforcing the second invariant as the result
of executing lines \ref{alg:axp:ln01} and \ref{alg:axp:ln02}.

The use of the second invariant
$\kappa(\mbf{v}_{\tn{L}})\not=\kappa(\mbf{v}_{\tn{U}})$ is justified
by the following result.

\begin{proposition} \label{prop:cxp01}
  If $\kappa(\mbf{v}_{\tn{L}})\not=\kappa(\mbf{v}_{\tn{U}})$, then it
  holds that,
  $\exists(\mbf{x}\in\mbb{F}).
          [\mbf{v}_{\tn{L}}\le\mbf{x}\le\mbf{v}_{\tn{U}}]\land
          [\kappa(\mbf{x})\not=\kappa(\mbf{v})]$.
\end{proposition}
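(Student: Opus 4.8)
The plan is to exhibit an explicit witness for the existential, taken to be one of the two corner points $\mbf{v}_{\tn{L}}$ or $\mbf{v}_{\tn{U}}$ itself. The key preliminary observation is that, in the setting in which this proposition is applied, the two bound vectors sandwich the instance: each coordinate of $\mbf{v}_{\tn{L}}$ is either $\lambda(i)$ or $v_i$, and each coordinate of $\mbf{v}_{\tn{U}}$ is either $\mu(i)$ or $v_i$, so that from $\lambda(i)\le v_i\le\mu(i)$ I obtain the coordinate-wise relations $\mbf{v}_{\tn{L}}\le\mbf{v}\le\mbf{v}_{\tn{U}}$ (and in particular $\mbf{v}_{\tn{L}}\le\mbf{v}_{\tn{U}}$). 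This is an invariant of how the corners are built: they are initialized to $(\lambda(1),\ldots,\lambda(N))$ and $(\mu(1),\ldots,\mu(N))$ and only ever collapsed toward the instance value $v_i$.

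Given this sandwich, I would invoke monotonicity of $\kappa$ twice. From $\mbf{v}_{\tn{L}}\le\mbf{v}$ monotonicity gives $\kappa(\mbf{v}_{\tn{L}})\preccurlyeq\kappa(\mbf{v})$, and from $\mbf{v}\le\mbf{v}_{\tn{U}}$ it gives $\kappa(\mbf{v})\preccurlyeq\kappa(\mbf{v}_{\tn{U}})$, yielding the chain $\kappa(\mbf{v}_{\tn{L}})\preccurlyeq\kappa(\mbf{v})\preccurlyeq\kappa(\mbf{v}_{\tn{U}})$ in the totally ordered class set $\fml{K}$.

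The hypothesis $\kappa(\mbf{v}_{\tn{L}})\not=\kappa(\mbf{v}_{\tn{U}})$ then forces at least one of the two $\preccurlyeq$ relations in this chain to be strict. If $\kappa(\mbf{v}_{\tn{L}})\prec\kappa(\mbf{v})$, then $\mbf{x}=\mbf{v}_{\tn{L}}$ satisfies $\kappa(\mbf{x})\not=\kappa(\mbf{v})$; otherwise $\kappa(\mbf{v})\prec\kappa(\mbf{v}_{\tn{U}})$ and $\mbf{x}=\mbf{v}_{\tn{U}}$ does. In either case the chosen $\mbf{x}$ is a corner of the box, so $\mbf{v}_{\tn{L}}\le\mbf{x}\le\mbf{v}_{\tn{U}}$ holds trivially (using $\mbf{v}_{\tn{L}}\le\mbf{v}_{\tn{U}}$), and $\mbf{x}$ witnesses the existential.

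I expect no genuine obstacle: this is the dual of \autoref{prop:axp01} and the easier direction, since it needs only one witness rather than a claim about every point of the box. The single point requiring care is the justification of the sandwich $\mbf{v}_{\tn{L}}\le\mbf{v}\le\mbf{v}_{\tn{U}}$. In fact monotonicity can be dispensed with entirely here: two distinct classes cannot both equal $\kappa(\mbf{v})$, so at least one corner already differs from $\kappa(\mbf{v})$; I would nonetheless keep the monotonicity-based argument above for symmetry with the proof of \autoref{prop:axp01}.
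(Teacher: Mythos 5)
Your proof is correct. The paper itself states \autoref{prop:cxp01} without an explicit proof, so there is no official argument to compare against, but your reasoning is exactly the intended one: the sandwich $\mbf{v}_{\tn{L}}\le\mbf{v}\le\mbf{v}_{\tn{U}}$ is indeed an invariant of how the two corner vectors are maintained (each coordinate of $\mbf{v}_{\tn{L}}$ is $\lambda(i)$ or $v_i$, each coordinate of $\mbf{v}_{\tn{U}}$ is $\mu(i)$ or $v_i$), and from it one of the two corners must witness the existential. Your closing observation is the sharpest form of the argument and worth keeping as the main line rather than a footnote: since $\kappa(\mbf{v}_{\tn{L}})\not=\kappa(\mbf{v}_{\tn{U}})$, the two corner classes cannot both equal $\kappa(\mbf{v})$, so at least one corner $\mbf{x}\in\{\mbf{v}_{\tn{L}},\mbf{v}_{\tn{U}}\}$ already satisfies $\kappa(\mbf{x})\not=\kappa(\mbf{v})$ and lies in the box; monotonicity of $\kappa$ is not needed at all here, only the coordinate-wise bound $\mbf{v}_{\tn{L}}\le\mbf{v}_{\tn{U}}$, in contrast with \autoref{prop:axp01}, where monotonicity is essential.
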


Similarly to $\axp$, $\cxp$ analyzes one feature at a time. Starting
from the set $\fml{C}$ of candidate features
(in~\autoref{alg:cxp:ln07}), the algorithm iteratively picks a feature
$i$ from $\fml{C}$ and makes a 
decision about whether to drop the feature from the explanation.
The first step is to assume that the feature $i$ can indeed be fixed
to the corresponding value in $\mbf{v}$. This is done
in~\autoref{alg:cxp:ln08}, by calling the following function
$\mkfix$,
where $\fml{A}$ is replaced by $\fml{C}$, and $B$ is replaced by
$\fml{D}$, and so feature $i$ is moved from $\fml{C}$ to $\fml{D}$.
In addition, the value of $i$ is now fixed to its value in $\mbf{v}$.

The next step of the algorithm (in~\autoref{alg:cxp:ln09}) is to
decide whether fixing the value of  $i$ breaks the invariant
$\kappa(\mbf{v}_{\tn{L}})\not=\kappa(\mbf{v}_{\tn{U}})$.
If the invariant is not broken, then the algorithm moves to
analyze the next feature (in~\autoref{alg:cxp:ln07}). However, if the
invariant is broken, then the feature cannot be fixed, and so it must
be allowed to take any value from its domain.
This is done by calling (in~\autoref{alg:cxp:ln10}) the following
function $\mkuniv$, with
$\fml{A}$ replaced by $\fml{D}$ and $\fml{B}$ replaced by 
$\fml{P}$, and so feature $i$ is moved from $\fml{D}$ to $\fml{P}$.
In addition, the value of $i$ is once again allowed to take any value
from its domain.
After analyzing all features, the algorithm $\cxp$ terminates
(in~\autoref{alg:cxp:ln13}) by returning the (minimal) set of features 
$\fml{P}$ that are allowed to take any value from their domain.
It is immediate to conclude that each feature is analyzed once, and
that for each feature, the classifier is invoked twice.
Given the discussion above, we conclude that,
\begin{theorem}
  Given a monotonic classifier, an instance $\mbf{v}$ with prediction
  $c=\kappa(\mbf{v})$, \autoref{alg:cxp} computes one CXp in linear
  time in the running time complexity of the classifier.
\end{theorem}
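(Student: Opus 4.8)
The plan is to verify, for \autoref{alg:cxp} run with $\fml{S}=\emptyset$, that the returned set $\fml{P}$ satisfies \eqref{eq:cxp} and is subset-minimal, and then to count classifier calls. I assume the instance admits a CXp, equivalently that the initialization in lines~\ref{alg:cxp:ln01}--\ref{alg:cxp:ln02} (global minimum in $\mbf{v}_{\tn{L}}$, global maximum in $\mbf{v}_{\tn{U}}$) already gives $\kappa(\mbf{v}_{\tn{L}})\neq\kappa(\mbf{v}_{\tn{U}})$; otherwise monotonicity forces a constant prediction and no CXp exists. Throughout I rely on the standing fact that a free feature $j$ (i.e.\ $j\in\fml{C}\cup\fml{P}$) has coordinate $j$ spanning $[\lambda(j),\mu(j)]$ in $[\mbf{v}_{\tn{L}},\mbf{v}_{\tn{U}}]$, whereas a fixed feature $j$ (i.e.\ $j\in\fml{D}$) is pinned to $v_j$ in both vectors. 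First I would show by induction that both loop invariants hold. Each iteration calls $\mkfix$ on $i$, pinning coordinate $i$ to $v_i$; if this makes $\kappa(\mbf{v}_{\tn{L}})=\kappa(\mbf{v}_{\tn{U}})$, then $\mkuniv$ on line~\ref{alg:cxp:ln10} restores coordinate $i$ to $[\lambda(i),\mu(i)]$, i.e.\ exactly to the state at the start of the iteration, so $\kappa(\mbf{v}_{\tn{L}})\neq\kappa(\mbf{v}_{\tn{U}})$ is recovered; the partition invariant is trivially maintained. Since each feature is processed once, on termination $\fml{C}=\emptyset$, the pair $\{\fml{D},\fml{P}\}$ partitions $\fml{F}$, and $\kappa(\mbf{v}_{\tn{L}})\neq\kappa(\mbf{v}_{\tn{U}})$ still holds.

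For soundness, at termination the free features are exactly $\fml{P}$ and the fixed features are $\fml{F}\setminus\fml{P}=\fml{D}$. Applying \autoref{prop:cxp01} to the final $\mbf{v}_{\tn{L}},\mbf{v}_{\tn{U}}$ gives an $\mbf{x}$ with $\mbf{v}_{\tn{L}}\le\mbf{x}\le\mbf{v}_{\tn{U}}$ and $\kappa(\mbf{x})\neq\kappa(\mbf{v})$; for each $j\in\fml{D}$ we have $v_j\le x_j\le v_j$, hence $x_j=v_j$, so $\mbf{x}$ witnesses \eqref{eq:cxp} with $\fml{Y}=\fml{P}$.

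The main obstacle is minimality. I would first note that the property \eqref{eq:cxp} is upward closed: enlarging $\fml{Y}$ only drops equality constraints, so any witness for $\fml{Y}$ witnesses every superset; consequently it suffices to show that $\fml{P}\setminus\{i\}$ fails \eqref{eq:cxp} for each $i\in\fml{P}$. The key is that $i$ entered $\fml{P}$ only because, at the iteration processing $i$, the call to $\mkfix$ produced $\kappa(\mbf{v}_{\tn{L}})=\kappa(\mbf{v}_{\tn{U}})$; by \autoref{prop:axp01}, every point of the box $[\mbf{v}_{\tn{L}},\mbf{v}_{\tn{U}}]$ at that instant predicts $\kappa(\mbf{v})$. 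At that instant the fixed set is $\{i\}$ together with the features dropped before $i$, and since a dropped feature is never revisited this set is contained in $(\fml{F}\setminus\fml{P})\cup\{i\}=\fml{D}\cup\{i\}$ --- exactly the features fixed when testing $\fml{P}\setminus\{i\}$. Because fixing more coordinates only shrinks the box, the box associated with $\fml{P}\setminus\{i\}$ lies inside that earlier box, so all of its points still predict $\kappa(\mbf{v})$ and no witness exists. Hence $\fml{P}\setminus\{i\}$ is not a CXp for any $i$, establishing minimality.

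Finally, each of the $N$ features is processed exactly once, and the only work per iteration beyond constant time is the two evaluations of $\kappa$ needed to test the invariant; thus at most $2N$ classifier calls are made, so the running time is linear in the classifier's run-time complexity, as claimed. The seeded case $\fml{S}\neq\emptyset$ merely pre-fixes the seed features in lines~\ref{alg:cxp:ln04}--\ref{alg:cxp:ln06} and, under the stated precondition that the invariant survives this, leaves every step of the argument intact.
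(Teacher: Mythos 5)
Your proposal is correct and follows the same route the paper takes: maintain the invariant $\kappa(\mbf{v}_{\tn{L}})\not=\kappa(\mbf{v}_{\tn{U}})$ across the single pass over the features, invoke \autoref{prop:cxp01} at termination for soundness of $\fml{P}$ as a set satisfying \eqref{eq:cxp}, and count two classifier calls per feature for the linear bound. The one place where you go beyond the paper's informal discussion is the minimality argument --- observing that \eqref{eq:cxp} is upward closed so only single-feature removals need checking, and that for each $i\in\fml{P}$ the box tested when $i$ was fixed contains the box for $\fml{P}\setminus\{i\}$, so \autoref{prop:axp01} rules out any witness --- which is exactly the detail the paper asserts without proof, and you supply it correctly.
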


We illustrate the operation of $\cxp$, with an example.

\begin{example} \label{ex01:cxpm}
  For the running example (see Examples \ref{ex01:def}, \ref{ex01:xps}
  and \ref{ex01:axpm}), for instance $\mbf{v}_0=(10,10,5,0)$ with
  prediction $A$, we illustrate the computation of one CXp. The
  algorithm's execution is summarized in~\autoref{tab:cxp-ex}.
  (When computing one CXp, a feature is \emph{kept} (decision is
  \keepmark) if it is declared free, and it is \emph{dropped}
  (decision is \dropmark) if it must be fixed.) 
  As can be observed, a contrastive explanation is: $\{2\}$,
  i.e.\ there is an assignment to feature 2 (i.e.\ to $X$), which
  guarantees a change of prediction when the other features are kept
  to their values. For example, by setting $X=0$ (and keeping the
  remaining values fixed), the value of the prediction changes.
  %
\end{example}

\paragraph{Complexity.}
As can be readily concluded from \autoref{alg:axp} and
\autoref{alg:cxp}, the algorithms execute in linear time in the number
of features. However, in each iteration of the algorithm, the
classifier is invoked twice, for finding the predicted classes for
$\mbf{v}_{\tn{L}}$ and for $\mbf{v}_{\tn{U}}$.
We will represent the time required by the classifier as
$\fml{T}_{\mbb{C}}$, and so the overall run time of each algorithm is
$\fml{O}(|\fml{F}|\times\fml{T}_{\mbb{C}})$.

\subsection{Enumerating Explanations} \label{ssec:expm}

We first show that for monotonic classifiers,
the enumeration of explanations with polynomial-time
delay is computationally hard. 

\begin{theorem} \label{thm:exp01}
  Determining the existence of $\lfloor N/2 \rfloor{+}1$ AXp's
  (or CXp's) of a monotonic $N$-feature classifier is NP-complete.
\end{theorem}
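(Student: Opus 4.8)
I would prove the two halves of NP-completeness separately: membership in NP (a guess-and-verify argument that leans on \autoref{prop:axp01} and \autoref{prop:cxp01} to make single-explanation checking polynomial), and NP-hardness (a reduction from \textsc{3-SAT} that builds a poly-time-evaluable monotonic classifier whose number of AXp's is tied to satisfiability, followed by a padding step that moves the decision threshold to exactly $\lfloor N/2\rfloor{+}1$).

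\textbf{Membership in NP.} A certificate is a list of $\lfloor N/2\rfloor{+}1$ pairwise-distinct feature subsets, each claimed to be an AXp (resp.\ CXp). The point is that, for a monotonic classifier, checking a single claimed AXp $\fml{X}\subseteq\fml{F}$ is polynomial. First test sufficiency in the sense of \eqref{eq:axp}: set $\mbf{v}_{\tn{L}},\mbf{v}_{\tn{U}}$ to agree with $\mbf{v}$ on $\fml{X}$ and to take the domain extremes $\lambda(i),\mu(i)$ off $\fml{X}$; by \autoref{prop:axp01}, $\fml{X}$ is sufficient iff $\kappa(\mbf{v}_{\tn{L}})=\kappa(\mbf{v}_{\tn{U}})$, i.e.\ two classifier calls. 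Then test minimality by, for each $i\in\fml{X}$, freeing coordinate $i$ and confirming the equality breaks ($|\fml{X}|$ further calls). The whole check costs $\fml{O}(|\fml{F}|\times\fml{T}_{\mbb{C}})$, and CXp's are verified symmetrically via \autoref{prop:cxp01}. Verifying all $\lfloor N/2\rfloor{+}1$ subsets and their distinctness is therefore polynomial, so the problem lies in NP.

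\textbf{NP-hardness.} Given a \textsc{3-SAT} formula $\phi$ over $z_1,\dots,z_n$, I would encode literals monotonically by introducing, for each $z_j$, two boolean features $x_j,\bar{x}_j$, and define the classifier by a monotone CNF such as $\Psi=\bigwedge_j(x_j\vee\bar{x}_j)\wedge\bigwedge_i D_i$, where $D_i$ replaces each literal of clause $i$ by its representative feature. $\Psi$ is evaluable in polynomial time (hence a legitimate black-box classifier) and, taking the all-ones instance $\mbf{v}$, the AXp's are exactly its prime implicants, i.e.\ the minimal transversals selecting at least one representative per variable while covering every clause. The key observation is that the \emph{consistent} minimal transversals (choosing exactly one of $x_j,\bar{x}_j$ per $j$) are in bijection with the satisfying assignments of $\phi$, so satisfiability governs how many AXp's appear. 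To turn this into the specific count $\lfloor N/2\rfloor{+}1$, I would attach two kinds of padding features: irrelevant features the classifier ignores (these raise $N$ without creating new AXp's, via $\mkuniv$-style freeness) and a disjoint block $w_1,\dots,w_p$ OR-ed onto the output, each $\{w_i\}$ contributing exactly one extra singleton AXp (raising $N$ and the AXp count in lockstep). With the right choice of the two padding sizes, the inequality $(\#\textnormal{AXp}\ge\lfloor N/2\rfloor{+}1)$ becomes equivalent to $\phi$ being satisfiable; the CXp version follows by the MHS duality between AXp's and CXp's.

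\textbf{Main obstacle.} The delicate point is the interaction between monotonicity and the count. Since negation cannot be represented monotonically, literals must be duplicated, which admits \emph{inconsistent} explanations (selecting both $x_j$ and $\bar{x}_j$) whose number is itself hard to control and could swamp the contribution of the genuine satisfying assignments. The technical heart of the proof is thus designing the gadget and the padding so that the AXp count depends on satisfiability in a clean, efficiently calibratable way and so that the baseline (unsatisfiable) count sits exactly at $\lfloor N/2\rfloor$; everything else (poly-time evaluability of the classifier, the bijection for consistent selections, and the duality transfer to CXp's) is routine by comparison.
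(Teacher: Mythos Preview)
Your NP-membership argument is fine and matches the paper's (implicit) one.

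The hardness argument, however, has a genuine gap precisely at the point you yourself flag as the ``main obstacle,'' and the construction you propose does not close it. With $\Psi=\bigwedge_j(x_j\vee\bar{x}_j)\wedge\bigwedge_i D_i$, the AXp's at the all-ones instance are the minimal transversals of this clause family. Even when $\phi$ is unsatisfiable, the number of such minimal transversals depends intricately on $\phi$ and is not, in general, efficiently computable; you therefore have no polynomial-time way to determine the baseline count, and hence no way to choose your padding parameters so that the threshold lands at $\lfloor N/2\rfloor$. Your scheme requires the unsatisfiable-case count to be a known, efficiently computable number, and with this $\Psi$ it is not.

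The paper sidesteps the difficulty by a different gadget: rather than conjoining variable clauses, it \emph{disjoins} the inconsistent pairs into the classifier, taking $\kappa(\mbf{x})=1$ iff $\bigl(\exists j:\,x_j{=}x_{j+k}{=}1\bigr)\lor\tilde{\Phi}(\mbf{x})$, with $N=2k$. Now each pair $\{j,j{+}k\}$ is itself an AXp of $\kappa(\mbf{1})=1$ (minimality uses that $\phi$ is not trivially satisfiable), giving \emph{exactly} $N/2$ baseline AXp's regardless of $\phi$; any further AXp must avoid every such pair and therefore encodes a consistent satisfying assignment. No padding or counting is needed: ``at least $N/2+1$ AXp's'' is directly equivalent to satisfiability of $\phi$.

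Finally, your transfer to CXp's via minimal-hitting-set duality does not work as stated: duality relates the \emph{families} of AXp's and CXp's for a fixed instance, but gives no usable bound relating their \emph{cardinalities} (a family of $k{+}1$ sets can have far fewer than $k{+}1$ minimal hitting sets). The paper instead reruns the same construction at the instance $\mbf{0}$ with prediction $0$: the pairs $\{j,j{+}k\}$ are now the $N/2$ obvious CXp's, and additional CXp's again correspond exactly to satisfying assignments.
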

(The proof is included in the appendix.)
Since the enumeration of AXp's and CXp's with polynomial delay is
unlikely, we describe in this section how to use SAT reasoners for 
the enumeration of AXp's and CXp's of a monotonic classifier.
%
%
(Although we prove the algorithm to be sound and complete, the
algorithm necessarily has leeway in selecting the order in which AXp's
and CXp's are listed.)

The algorithm uses the following propositional representation: 
\begin{enumerate}[nosep,topsep=0pt,partopsep=0pt]
\item The algorithm will iteratively add clauses to a CNF formula
  $\fml{H}$. The clauses in $\fml{H}$ account for the AXp's and CXp's
  already computed, and serve to prevent their repetition.
\item Formula $\fml{H}$ is defined on a set of variables $u_i$,
  $1\le{i}\le{n}$, where each $u_i$ denotes whether feature $i$ is
  declared free ($u_i=1$) or is alternatively declared fixed
  ($u_i=0$).
\end{enumerate}
The algorithm proposed in this section requires exactly one call to a
SAT reasoner before computing one explanation (either AXp/CXp),
and one additional call to decide that all explanations have been
computed. As a result, the number of calls to a SAT reasoner is
$|\tn{AXp}|+|\tn{CXp}|+1$. Furthermore, the size of the formula grows
by one clause after each AXp or CXp is computed. In practice, for a
wide range of ML settings, both the number of variables and the number
of clauses are well within the reach of modern SAT reasoners.
\begin{proposition} \label{prop:exp01}
  Let $\mbf{v}$ be a point in feature space,
  let $\kappa(\mbf{v})=c\in\fml{K}$, and let
  $\fml{Z}\subseteq\fml{F}$. Then, either \eqref{eq:axp}
  (on page~\pageref{eq:axp}) holds, with
  $\fml{X}=\fml{Z}$, or \eqref{eq:cxp} (also on page~\pageref{eq:axp})
  holds, with   $\fml{Y}=\fml{F}\setminus\fml{Z}$, but not both.
\end{proposition}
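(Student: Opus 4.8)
The plan is to observe that, once the prescribed set substitutions are made, the two displayed conditions are literally the negations of one another, so exactly one of them must hold by the law of excluded middle together with non-contradiction.

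First I would unfold both conditions with the specified sets. Setting $\fml{X}=\fml{Z}$ in \eqref{eq:axp} yields the universal statement
\[
  \forall(\mbf{x}\in\mbb{F}).\left[\bigwedge\nolimits_{i\in\fml{Z}}(x_i=v_i)\right]\limply(\kappa(\mbf{x})=c),
\]
whereas setting $\fml{Y}=\fml{F}\setminus\fml{Z}$ in \eqref{eq:cxp} yields the existential statement
\[
  \exists(\mbf{x}\in\mbb{F}).\bigwedge\nolimits_{j\in\fml{F}\setminus\fml{Y}}(x_j=v_j)\land(\kappa(\mbf{x})\not=c).
\]
The key bookkeeping step is that $\fml{F}\setminus\fml{Y}=\fml{F}\setminus(\fml{F}\setminus\fml{Z})=\fml{Z}$, so the conjunction in the second statement ranges over exactly the same index set $\fml{Z}$ as the antecedent in the first.

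With both predicates indexed by the same set, I would abbreviate the antecedent $\bigwedge_{i\in\fml{Z}}(x_i=v_i)$ as $P(\mbf{x})$ and the consequent $\kappa(\mbf{x})=c$ as $Q(\mbf{x})$. Standard quantifier negation then gives $\neg\forall\mbf{x}.[P(\mbf{x})\limply Q(\mbf{x})]\equiv\exists\mbf{x}.[P(\mbf{x})\land\neg Q(\mbf{x})]$, and the right-hand side is precisely the existential statement above (since $\neg Q(\mbf{x})$ is $\kappa(\mbf{x})\not=c$). Hence the two conditions are exact logical negations; by excluded middle at least one holds, and by non-contradiction they cannot both hold, which is exactly the claimed dichotomy.

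I do not expect any genuine obstacle here; the only point requiring care is to keep in mind that the proposition concerns the formulas \eqref{eq:axp} and \eqref{eq:cxp} purely as the sufficiency/existence conditions for the given sets, and not the full notions of AXp and CXp (which additionally demand subset-minimality). Minimality plays no role in the dichotomy itself and would only enter later, when these weak conditions are combined with the hitting-set duality to extract actual explanations.
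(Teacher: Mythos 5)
Your proof is correct and coincides with the paper's own (implicit) justification: the paper states this proposition without a formal proof, treating it as immediate precisely because, after substituting $\fml{X}=\fml{Z}$ and $\fml{Y}=\fml{F}\setminus\fml{Z}$ (so that $\fml{F}\setminus\fml{Y}=\fml{Z}$), the two conditions are exact logical negations of one another. Your added remark that minimality plays no role here is also right and matches the paper's intended reading of the statement.
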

\autoref{prop:exp01} essentially states that, given a set $\fml{Z}$ of
features, if these are fixed, and the others are allowed to take any
value from their domains, then either the prediction never changes, or
there exists an assignment to the non-fixed features, which causes the
prediction to change.
The approach for enumerating AXp's and CXp's is shown
in~\autoref{alg:exp}.
\begin{algorithm}[t]
  \caption{Enumeration of AXp's and CXp's} \label{alg:exp}
  \begin{algorithmic}[1]
  \STATE{%
    \label{alg:exp:ln01} $\fml{H}\gets\emptyset$}%
  \COMMENT{$\fml{H}$ defined on set $U$}
  \REPEAT\label{alg:exp:ln02} 
  \STATE{%
    \label{alg:exp:ln03} $(\outc,\mbf{u})\gets\SAT(\fml{H})$}
  \IF{\label{alg:exp:ln04} $\outc=\TRUE$}
  \STATE{%
    \label{alg:exp:ln05}%
    $\mbf{v}_{\tn{L}}\gets(v_{\tn{L}_1},\ldots,v_{\tn{L}_N}),\:\tn{s.t.}\:
    v_{\tn{L}_i}\gets\tn{ite}(u_i,\lambda(i),v_i)$}
  \STATE{%
    \label{alg:exp:ln06}%
    $\mbf{v}_{\tn{U}}\gets(v_{\tn{U}_1},\ldots,v_{\tn{U}_N}),\:\!\tn{s.t.}\;\!
    v_{\tn{U}_i}\gets\tn{ite}(u_i,\mu(i),v_i)$} 
  \IF{\label{alg:exp:ln07}$\kappa(\mbf{v}_{\tn{L}})=\kappa(\mbf{v}_{\tn{U}})$}
  \STATE{\label{alg:exp:ln08}$\fml{S}\gets\{i\in\fml{F}\,|\,u_i=1\}$}%
  \COMMENT{$\fml{F}\setminus\fml{S}\supseteq$ some AXp}
  \STATE{\label{alg:exp:ln09}$\fml{P}\gets\axp(\fml{F},\fml{S},\mbf{u})$}
  \STATE{\label{alg:exp:ln10}$\prtaxp(\fml{P})$}
  \STATE{\label{alg:exp:ln11}$\fml{H}\gets\fml{H}\cup\{(\lor_{i\in\fml{P}}u_i)\}$}
  \ELSE\label{alg:exp:ln12}
  \STATE{\label{alg:exp:ln13}$\fml{S}\gets\{i\in\fml{F}\,|\,u_i=0\}$}%
  \COMMENT{$\fml{F}\setminus\fml{S}\supseteq$ some CXp}
  \STATE{\label{alg:exp:ln14}$\fml{P}\gets\cxp(\fml{F},\fml{S},\mbf{u})$}
  \STATE{\label{alg:exp:ln15}$\prtcxp(\fml{P})$}
  \STATE{\label{alg:exp:ln16}$\fml{H}\gets\fml{H}\cup\{(\lor_{i\in\fml{P}}\neg{u_i})\}$}
  \ENDIF
  \ENDIF
  \UNTIL{\label{alg:exp:ln19}$\outc=\FALSE$}
\end{algorithmic}

\end{algorithm}
The algorithm starts in \autoref{alg:exp:ln01} by initializing the CNF
formula $\fml{H}$ without clauses (these will be added as the 
algorithm executes).
The main loop (from \autoref{alg:exp:ln02} to \autoref{alg:exp:ln19})
is executed while the formula $\fml{H}$ is satisfiable. This is
decided with a call to a SAT reasoner (in~\autoref{alg:axp:ln03}).
Any satisfying assignment to the formula $\fml{H}$ partitions the
features into two sets: one denoting the features that can take any
value (with $u_i=1$) and another denoting the features that take the
corresponding value in $\mbf{v}$ (with $u_i=0$). (The assignment
effectively identifies a set $\fml{Z}\subseteq\fml{F}$, of fixed
features, and thus we can invoke~\autoref{prop:exp01}.)
In~\autoref{alg:exp:ln05} and~\autoref{alg:exp:ln06}, the algorithm
creates $\mbf{v}_{\tn{L}}$ and $\mbf{v}_{\tn{U}}$. For a fixed
feature $i$, both $\mbf{v}_{\tn{L}}$ and $\mbf{v}_{\tn{U}}$ are
assigned value $v_i$. For a free feature $i$, $\mbf{v}_{\tn{L}}$ and
$\mbf{v}_{\tn{U}}$ are respectively assigned to $\lambda(i)$ and
$\mu(i)$. Let $\fml{Z}$ denote the set of fixed features.
In~\autoref{alg:exp:ln07}, we check in which case of
\autoref{prop:exp01} applies.

If $\kappa(\mbf{v}_{\tn{L}})=\kappa(\mbf{v}_{\tn{U}})$, then we know
that the invariant of \autoref{alg:axp} holds. Moreover,
$\fml{F}\setminus\fml{Z}$ is a subset of an AXp. Hence, we set
$\fml{S}=\fml{F}\setminus\fml{Z}$ as the \emph{seed} for $\axp$.
This is shown in lines \ref{alg:exp:ln08} and \ref{alg:exp:ln09}.
After reporting the computed AXp, represented by the set of features
$\fml{P}$, we prevent the same AXp from being computed again by
requiring that at least one of the fixed features must be free in
future satisfying assignments of $\fml{H}$. This is represented as a
positive clause.

\begin{proposition}
  In the case $\kappa(\mbf{v}_{\tn{L}})=\kappa(\mbf{v}_{\tn{U}})$, set
  $\fml{S}$ is such that, for any previously computed AXp, at least
  one feature will be included in $\fml{S}$ (as a free literal). Since
  $\axp$ only grows $\fml{S}$, then the \autoref{alg:exp} does not
  repeat AXp's.
\end{proposition}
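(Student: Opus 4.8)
The plan is to argue directly from the single positive clause that \autoref{alg:exp} adds after each AXp is reported. Fix an AXp $\fml{P}_k$ returned by the call in \autoref{alg:exp:ln09} at some iteration $k$. Immediately afterwards, \autoref{alg:exp:ln11} inserts the clause $\omega_k=(\lor_{i\in\fml{P}_k}u_i)$ into $\fml{H}$, and no clause is ever removed from $\fml{H}$. The first thing I would record is the immediate consequence: every subsequent satisfying assignment $\mbf{u}$ produced by the SAT call in \autoref{alg:exp:ln03} must satisfy $\omega_k$, so some $i^\ast\in\fml{P}_k$ has $u_{i^\ast}=1$. By the definition of the seed in \autoref{alg:exp:ln08}, $\fml{S}=\{i\in\fml{F}\mid u_i=1\}$, this means $i^\ast\in\fml{S}$. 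This is precisely the first sentence of the proposition: for every previously computed AXp, at least one of its features sits in the current seed as a free literal.

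Next I would convert this into non-repetition using the seed handling of \autoref{alg:axp}. When $\axp(\fml{F},\fml{S},\mbf{u})$ is invoked, the loop of lines~\ref{alg:axp:ln04} to~\ref{alg:axp:ln06} applies $\mkuniv$ to every $i\in\fml{S}$, freeing it and placing it in $\fml{D}$; the second loop (lines~\ref{alg:axp:ln07} to~\ref{alg:axp:ln10}) ranges only over $\fml{F}\setminus\fml{S}$, so a seed feature is never a candidate for $\mkfix$ and is never moved into $\fml{P}$. Hence the returned explanation satisfies $\fml{P}\subseteq\fml{F}\setminus\fml{S}$, which is the exact sense in which ``$\axp$ only grows $\fml{S}$''. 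Combining the two paragraphs, $i^\ast\in\fml{S}$ forces $i^\ast\notin\fml{P}$ while $i^\ast\in\fml{P}_k$, so $\fml{P}\neq\fml{P}_k$. Since $\fml{P}_k$ was an arbitrary previously reported AXp, the freshly reported set differs from every earlier one, and no AXp is repeated.

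Before concluding I would discharge the one correctness fact the argument silently uses, namely that the call in \autoref{alg:exp:ln09} indeed returns a genuine AXp. This is the precondition of \autoref{alg:axp} (its ``Require'' annotation): after freeing exactly $\fml{S}$ one needs $\kappa(\mbf{v}_{\tn{L}})=\kappa(\mbf{v}_{\tn{U}})$. In the branch we are in, lines~\ref{alg:exp:ln05} to~\ref{alg:exp:ln06} build $\mbf{v}_{\tn{L}}$ and $\mbf{v}_{\tn{U}}$ by freeing exactly the features with $u_i=1$ (that is, $\fml{S}$) and fixing the rest, and \autoref{alg:exp:ln07} has just tested that these two points receive equal predictions; so the precondition holds and $\fml{P}$ is a bona-fide AXp contained in $\fml{F}\setminus\fml{S}$.

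The core combinatorial content---one free literal guaranteed per past AXp, a literal that $\axp$ can never re-pick---is immediate once the positive clause is in place. I therefore expect the only delicate step to be the bookkeeping of the last paragraph: lining up the seed $\fml{S}$ of \autoref{alg:exp:ln08} with the free/fixed partition that defines $\mbf{v}_{\tn{L}}$ and $\mbf{v}_{\tn{U}}$, so that the test $\kappa(\mbf{v}_{\tn{L}})=\kappa(\mbf{v}_{\tn{U}})$ gating the AXp branch is exactly the precondition consumed by $\axp$.
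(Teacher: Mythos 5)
Your argument is correct and follows exactly the line the paper takes (the paper states this proposition without a separate proof, relying on the surrounding prose: the positive clause $(\lor_{i\in\fml{P}_k}u_i)$ forces a free feature from each prior AXp into the seed, and $\axp$ never moves a seed feature into $\fml{P}$). Your writeup merely makes the two implicit steps explicit --- that $\fml{P}\subseteq\fml{F}\setminus\fml{S}$ because the second loop of \autoref{alg:axp} ranges only over $\fml{F}\setminus\fml{S}$, and that the test in \autoref{alg:exp:ln07} discharges the precondition of \autoref{alg:axp} --- so it is a faithful, slightly more detailed rendering of the same proof.
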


Moreover, if
$\kappa(\mbf{v}_{\tn{L}})\not=\kappa(\mbf{v}_{\tn{U}})$, then we know
that the invariant of \autoref{alg:cxp} holds. Moreover, $\fml{Z}$ is
a subset of a CXp. Hence, we set $\fml{S}=\fml{Z}$ as the \emph{seed}
for $\cxp$.
This is shown in lines \ref{alg:exp:ln13} and \ref{alg:exp:ln14}.
After reporting the computed CXp, represented by the set of features
$\fml{P}$, we prevent the same CXp from being computed by requiring
that at least one of the free features must be free in future
satisfying assignments of $\fml{H}$. This is represented as negative
clause.

\begin{proposition}
  In the case $\kappa(\mbf{v}_{\tn{L}})\not=\kappa(\mbf{v}_{\tn{U}})$, set
  $\fml{S}$ is such that, for any previously computed CXp, at least
  one feature will be included in $\fml{S}$ (as a fixed literal). Since
  $\cxp$ only grows $\fml{S}$, then the \autoref{alg:exp} does not
  repeat CXp's.
\end{proposition}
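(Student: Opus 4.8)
The plan is to combine two ingredients: the semantics of the negative blocking clause $(\lor_{i\in\fml{P}}\neg u_i)$ added in \autoref{alg:exp:ln16} after each CXp is reported, and the fact that $\cxp$ leaves every seed feature fixed. First I would fix some iteration of the main loop that enters the CXp branch (\autoref{alg:exp:ln12}) with satisfying assignment $\mbf{u}$, and let $\fml{P}'$ denote any CXp reported in an earlier iteration. The goal is to show $\fml{P}\neq\fml{P}'$ for the newly computed $\fml{P}$.

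For the first part of the statement, I would recall that when $\fml{P}'$ was reported, the clause $(\lor_{i\in\fml{P}'}\neg u_i)$ was inserted into $\fml{H}$ in \autoref{alg:exp:ln16}, and it remains in $\fml{H}$ thereafter. Since $\mbf{u}$ satisfies $\fml{H}$, it satisfies this clause, so there is at least one feature $i\in\fml{P}'$ with $u_i=0$. By the definition of the seed in \autoref{alg:exp:ln13}, $\fml{S}=\{i\in\fml{F}\,|\,u_i=0\}$, whence $i\in\fml{S}$. This establishes that at least one feature of every previously computed CXp appears in $\fml{S}$ as a fixed literal.

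For the conclusion, I would use the invariant that the call to $\cxp$ in \autoref{alg:exp:ln14} only grows the fixed set: the seed features are moved into $\fml{D}$ in lines \ref{alg:cxp:ln04}--\ref{alg:cxp:ln06} via $\mkfix$ and are never subsequently returned among the picked (free) features, so the reported CXp $\fml{P}$ satisfies $\fml{P}\cap\fml{S}=\emptyset$. Combining this with the previous paragraph, the witnessed feature $i$ lies in $\fml{S}\cap\fml{P}'$, hence $i\notin\fml{P}$ while $i\in\fml{P}'$, so $\fml{P}\neq\fml{P}'$. As $\fml{P}'$ was an arbitrary previously reported CXp, the new CXp differs from all of them, and \autoref{alg:exp} never repeats a CXp.

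The main obstacle I expect is the careful justification of the $\cxp$ monotonicity claim --- that every feature placed in the seed $\fml{S}$ stays fixed and is therefore excluded from the returned $\fml{P}$. This requires tracking the partition $(\fml{C},\fml{D},\fml{P})$ through \autoref{alg:cxp}: seed features enter $\fml{D}$ and are never touched again, while only features examined in the main loop (\autoref{alg:cxp:ln07}) can be promoted to $\fml{P}$. The argument is the exact dual of the one for AXp's in the preceding proposition, with the roles of $\mkfix$/$\mkuniv$ and of free/fixed interchanged, so I would also note this symmetry to avoid duplicating the bookkeeping.
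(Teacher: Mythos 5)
Your proof is correct and follows essentially the same argument the paper gives (the paper justifies this proposition only in the surrounding prose, not in a separate proof): the negative blocking clause $(\lor_{i\in\fml{P}'}\neg u_i)$ forces some $i\in\fml{P}'$ to have $u_i=0$ in any later satisfying assignment, that $i$ lands in the seed $\fml{S}$ of \autoref{alg:exp:ln13}, and since the main loop of \autoref{alg:cxp} only ranges over $\fml{F}\setminus\fml{S}$ the returned $\fml{P}$ is disjoint from $\fml{S}$, hence cannot equal $\fml{P}'$. Your added care in tracking the partition $(\fml{C},\fml{D},\fml{P})$ to justify $\fml{P}\cap\fml{S}=\emptyset$ is a welcome tightening of what the paper leaves implicit.
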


Given the 
above, and since the number of AXp's and CXp's
(being subsets of $\fml{F}$) is finite, then we have,

\begin{theorem}
  \autoref{alg:exp} is sound and complete for the enumeration of AXp's
  and CXp's.
\end{theorem}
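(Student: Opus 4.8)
The plan is to split the statement into \emph{soundness} (every set reported by $\prtaxp$ is genuinely an AXp, and every set reported by $\prtcxp$ is genuinely a CXp) and \emph{completeness with termination} (every AXp and every CXp is reported, each exactly once, and the loop halts). Soundness I would reduce to the correctness of the two subroutines. On \autoref{alg:exp:ln07} the algorithm evaluates $\kappa(\mbf{v}_{\tn{L}})=\kappa(\mbf{v}_{\tn{U}})$ for the bounds induced by the SAT model $\mbf{u}$, i.e.\ for the fixed set $\fml{Z}=\{i\mid u_i=0\}$; by \autoref{prop:exp01} exactly one of \eqref{eq:axp} (with $\fml{X}=\fml{Z}$) and \eqref{eq:cxp} (with $\fml{Y}=\fml{F}\setminus\fml{Z}$) then holds, and \autoref{prop:axp01} and \autoref{prop:cxp01} identify which branch the test selects. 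In the equality branch the seed $\fml{S}=\fml{F}\setminus\fml{Z}$ is exactly the set of free features, so once $\axp$ frees them its vectors coincide with the ones already tested and the precondition $\kappa(\mbf{v}_{\tn{L}})=\kappa(\mbf{v}_{\tn{U}})$ demanded by $\axp$ is met; the $\axp$ correctness theorem then yields a true AXp $\fml{P}\subseteq\fml{Z}$. The inequality branch is symmetric, using the $\cxp$ correctness theorem. This settles soundness.

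For termination and non-repetition I would invoke the two propositions stated immediately before the theorem: every seed $\fml{S}$ meets each previously reported AXp (resp.\ CXp) in at least one literal, and since $\axp$ (resp.\ $\cxp$) only enlarges $\fml{S}$, no explanation can be emitted twice. As AXp's and CXp's are subsets of the finite set $\fml{F}$ and each non-terminating iteration emits a fresh one while appending a single blocking clause to $\fml{H}$, the loop can run only finitely often before $\fml{H}$ becomes unsatisfiable.

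The crux is completeness, and for this I would lean on the minimal-hitting-set duality between AXp's and CXp's. Note first the effect of the blocking clauses: a reported AXp $\fml{A}_k$ contributes $\bigvee_{i\in\fml{A}_k}u_i$, which is falsified exactly by assignments whose fixed set contains $\fml{A}_k$, while a reported CXp $\fml{C}_k$ contributes $\bigvee_{i\in\fml{C}_k}\neg u_i$, falsified exactly by assignments whose fixed set is disjoint from $\fml{C}_k$. Now suppose the loop has just stopped with $\fml{H}$ unsatisfiable and, for contradiction, some AXp $\fml{A}^{*}$ was never reported. I would exhibit the assignment whose fixed set is $\fml{Z}=\fml{A}^{*}$ and check it satisfies $\fml{H}$: subset-minimality and distinctness of AXp's give $\fml{A}_k\not\subseteq\fml{A}^{*}$ for every reported $\fml{A}_k$, so every positive clause is satisfied, and duality makes $\fml{A}^{*}$ a hitting set of all CXp's, giving $\fml{A}^{*}\cap\fml{C}_k\neq\emptyset$, so every negative clause is satisfied. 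This contradicts unsatisfiability, so all AXp's are reported.

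The symmetric case takes a missing CXp $\fml{C}^{*}$ and the assignment with fixed set $\fml{Z}=\fml{F}\setminus\fml{C}^{*}$ (free set $\fml{C}^{*}$); here duality supplies $\fml{A}_k\cap\fml{C}^{*}\neq\emptyset$ for the positive clauses, and minimality plus distinctness of CXp's supply $\fml{C}_k\not\subseteq\fml{C}^{*}$ for the negative ones, again contradicting unsatisfiability. I expect this completeness step to be the main obstacle: the delicate point is verifying that the accumulated blocking clauses never inadvertently exclude an assignment witnessing a not-yet-found explanation, and it is precisely the hitting-set duality between AXp's and CXp's that rules this out. A minor subtlety to address along the way is that the reported $\fml{P}$ may differ from the SAT model's $\fml{Z}$, so the blocking clause is placed on the \emph{minimal} set $\fml{P}$ returned by $\axp$/$\cxp$ rather than on $\fml{Z}$; this is exactly what keeps the clause collection in correspondence with genuine explanations and lets the duality argument go through.
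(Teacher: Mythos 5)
Your proposal is correct, and it is considerably more detailed than what the paper actually writes: the paper's entire justification is the one-line remark that the result follows from the two preceding non-repetition propositions together with the finiteness of the family of AXp's and CXp's (each being a subset of $\fml{F}$), with \autoref{prop:exp01} guaranteeing that every SAT model leads to one fresh explanation. Your soundness and termination arguments coincide with that sketch. Where you genuinely go beyond the paper is the completeness direction: the paper never spells out why, once $\fml{H}$ becomes unsatisfiable, no AXp or CXp can have been missed. Your argument --- take an allegedly unreported AXp $\fml{A}^{*}$, consider the assignment with fixed set $\fml{Z}=\fml{A}^{*}$, and show it satisfies every positive blocking clause by subset-minimality and every negative blocking clause by the minimal-hitting-set duality of \eqref{eq:axp} and \eqref{eq:cxp}, contradicting unsatisfiability (and symmetrically for CXp's) --- is exactly the missing step, and it is the standard MARCO-style completeness argument that the paper implicitly relies on via its citation of the MUS-enumeration literature. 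Your closing observation that the blocking clause is posted on the minimized set $\fml{P}$ rather than on $\fml{Z}$ is also the right subtlety to flag: it is what keeps the clauses in bijection with genuine explanations so that the duality argument applies. In short, the proposal is a correct and welcome elaboration of a proof the paper essentially omits.
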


\begin{table*}
  \begin{center}
    \scalebox{1.0}{ 
      \renewcommand{\tabcolsep}{0.35em}
      \begin{tabular}{|c|c|c|cc|cc|cc|c|} \toprule
        Step & $\fml{H}$   &  $\mbf{u}~/~\msf{out}$ & 
        $\mbf{v}_{\tn{L}}$ & $\mbf{v}_{\tn{U}}$ &
        $\kappa(\mbf{v}_{\tn{L}})$ & $\kappa(\mbf{v}_{\tn{U}})$ &
        AXp & CXp & Clause added
        \\ \toprule
        1 & $\emptyset$ & $(0,0,1,0)$ &
        $(10,10,0,0)$ & $(10,10,10,0)$ & 
        $A$ & $A$ & $\{1,2\}$ & -- &
        $\begin{array}{l}(u_1\lor{u_2})\end{array}$
        \\ \midrule
        2 & $\begin{array}{l}(u_1\lor{u_2})\end{array}$ & $(1,0,0,1)$ &
        $(0,10,5,0)$ & $(10,10,5,10)$ & 
        $C$ & $A$ & -- & $\{1\}$ &
        $\begin{array}{l}(\neg{u_1})\end{array}$
        \\ \midrule
        3 & $\begin{array}{l}(u_1\lor{u_2})\\(\neg{u_1})\end{array}$ & $(0,1,1,0)$ &
        $(10,0,0,0)$ & $(10,10,10,0)$ & 
        $E$ & $A$ & -- & $\{2\}$ &
        $\begin{array}{l}(\neg{u_2})\end{array}$
        \\ \midrule
        4 & $\begin{array}{l}(u_1\lor{u_2})\\(\neg{u_1}),(\neg{u_2})\end{array}$ & UNSAT &
        -- & -- &  -- & -- & -- & -- & --
        \\ 
        \bottomrule
      \end{tabular}
    }
  \end{center}
  \caption{Execution of enumeration algorithm} \label{tab:exp-ex}
\end{table*}


\begin{example} \label{ex01:expm02a}
  Building on earlier examples,
  we summarize the main steps of the SAT oracle-based algorithm for
  enumerating AXp and CXp explanations.
  \autoref{tab:exp-ex} illustrates one execution of the proposed
  algorithm.
  There are 1 AXp's and 2 CXp's. (Regarding the call to the SAT
  oracle, the satisfying assignments shown are intended to be as
  arbitrary as possible, given the existing constraints; other
  satisfying assignments could have been picked.)
  For each computed AXp, we add to $\fml{H}$ one positive clause. In
  this example, we add the clause $(u_1\lor{u_2})$, since the AXp is
  $\{1,2\}$. By adding this clause, we guarantee that features 1 and 2
  will not both be deemed fixed by subsequent satisfying assignments
  of $\fml{H}$.
  Similarly, for each computed CXp, we add to $\fml{H}$ one negative
  clause. For the example, the clauses added are $(\neg{u_1})$ for CXp
  $\{1\}$, and $(\neg{u_2})$ for CXp $\{2\}$. In both cases, the
  added clause guarantees that feature 1 (resp.~2) will not be deemed
  free by subsequent satisfying assignments of $\fml{H}$.
  One additional observation is that the number of SAT oracle calls
  matches the number of AXp's plus the number of CXp's and plus one
  final call to terminate the algorithm's execution.
  For step 4 of the algorithm, it is easy to conclude that $\fml{H}$
  is unsatisfiable, as intended.
\end{example}

\subsection{Related Work}

The algorithms for computing one AXp or one CXp for a monotonic
classifier are novel. However, the insight of analyzing elements of a
set (i.e.\ features in our case) to find a minimal set respecting some
property has been studied in a vast number of settings
(e.g.\ \cite{chinneck-bk08} for an overview). The proposed solution
for reasoning about ordinal features that can take boolean, integer or
real values, represents another aspect of novelty.
In the case of monotonic classifiers, we obtain a running time that is
linear in the running time complexity of the classifier. This result
applies in the case of \emph{any} monotonic classifier, and so it
improves significantly over the worst-case exponential time and space
approach proposed in earlier work~\cite{darwiche-ijcai18}, for the
concrete case of monotonic bayesian networks.
The algorithm for enumerating AXp's and CXp's for a monotonic
classifier is also novel. However, it is inspired by the MARCO
algorithm for the analysis of inconsistent logic
theories~\cite{lpmms-cj16}. Although MARCO can be optimized in
different ways, \autoref{alg:exp} can be related with its most basic
formulation. Since computing one AXp or one CXp can be
achieved in polynomial time (conditioned by the classifier run time
complexity), then our approach guarantees that exactly one SAT
reasoner call is required for each computed minimal set (i.e.\ AXp or
CXp in our case).

\newcommand{\monoxp}{\texttt{MonoXP}\xspace}

\section{Experiments} \label{sec:res}

The objective of this section is to illustrate the scalability of both
the algorithms for finding one explanation, but also the algorithm for
enumerating explanations. The tool \monoxp implements the
algorithms~\ref{alg:axp},~\ref{alg:cxp} and~\ref{alg:exp}.
As observed in recent work, most monotonic classifiers are
not publicly available~\cite{garcia-neuroc19}. We analyze two
publicly available classifiers, and describe two experiments. The first
experiment evaluates \monoxp for explaining two recently proposed
tools, COMET%
\footnote{Available from\\\url{https://github.com/AishwaryaSivaraman/COMET}.}%
~\cite{vandenbroeck-nips20}
and monoboost%
\footnote{Available from\\\url{https://github.com/chriswbartley/monoboost}.}%
~\cite{reynolds-icde18}. %
COMET is run on the Auto-MPG%
\footnote{\url{https://archive.ics.uci.edu/ml/datasets/auto+mpg}.}
dataset studied in earlier work~\cite{vandenbroeck-nips20}, with the
choice justified by the time the classifier takes to run.
monoboost is run on a monotonic dataset with two classes (as required
by the tool)~\cite{reynolds-icde18}. We use a monotonic subset
(PimaMono) of the Pima
dataset\footnote{\url{https://sci2s.ugr.es/keel/dataset.php?cod=21}.}.
A second experiment compares \monoxp with
Anchor~\cite{guestrin-aaai18}, both in terms of the number of calls to
the classifier and running time, but also in terms of the quality of
the computed explanations\footnote{%
  It should be underlined that neither Anchor~\cite{guestrin-aaai18},
  LIME~\cite{guestrin-kdd16} nor SHAP~\cite{lundberg-nips17} can
  enumerate explanations, neither can these tools compute heuristic
  contrastive explanations.}, namely accuracy and size.
This second experiment also considers two datasets.
The first dataset is BankruptcyRisk~\cite{greco-bk98} (which is
monotonic if one instance is dropped). For this dataset, the monotonic
decision tree classifier proposed in earlier work is
used~\cite{bioch-idaj00}.
The second dataset is PimaMono, and the classifier used is the one
obtained with monoboost (as in the first experiment).
All experiments were run on a MacBook Pro, with a 2.4GHz quad-core i5
processor, and 16 GByte of RAM, running MacOS Big Sur.
For each dataset, we either pick 100 instances, randomly selected, or
the total number of instances in the dataset, in case this number does
not exceed 100.

\begin{table*}[t]
  \begin{subtable}{\textwidth}
    \begin{center}
      \renewcommand{\tabcolsep}{0.35em}
\begin{tabular}{|c|c|cccccc|} \toprule
    Dataset/Tool & \#Inst. &
    Avg.~\#~expl. &
    Avg.~AXp~sz &
    Avg.~CXp~sz &
    Avg.~classif.~time &
    Avg.~run time &
    \% classif. time
    \\ \toprule
    AutoMPG/CMT &
    100 &
    2.35 &
    1.49 &
    1.02 &
    105.90s &
    105.92s &
    99.99\%
    \\ \midrule
    PimaMono/MBT &
    69 &
    9.09 &
    1.27 &
    3.36 &
    16.285s &
    16.360s &
    99.54\%
    \\ \bottomrule
\end{tabular}

    \end{center}
    \caption{Assessing \monoxp on the Auto-MPG and PimaMono datasets,
      using resp.~COMET or monoboost as the classifier} \label{tab:res01}
  \end{subtable}

  \smallskip\smallskip

  \begin{subtable}{\textwidth}
    \begin{center}
      \renewcommand{\tabcolsep}{0.375em}
\begin{tabular}{|c|c|ccc|cccc|c|}
  \toprule
  \multirow{2}{*}{Dataset} &
  \multirow{2}{*}{\#Inst.} &
  \multicolumn{3}{c|}{Anchor} &
  \multicolumn{4}{c|}{\monoxp (AXp)} &
  \multirow{2}{*}{\% diff}
  \\ \cline{3-9}
  & &
  Avg.~Xp~sz & Avg.~time & \# Cls calls &
  Avg.~\#~Xp & Avg.~Xp~sz & Avg.~Xp~time & \# Cls calls &
  \\ \toprule
  B.~Risk & 39 &
  2.18 & 0.11s & 1217 &
  1.03 & 2.0 & 0.009s & 24 &
  64.1 
  \\ \midrule
  PimaMono & 69 &
  1.26 & 11.2s & 2967 &
  5.64 & 1.27 & 1.8s & 16 &
  18.8 
  \\ \bottomrule
\end{tabular}

    \end{center}
    \caption{Assessing \monoxp and Anchor on the Bankruptcy Risk
      and Pima Mono datasets} \label{tab:res02}
  \end{subtable}
  \caption{Results of running \monoxp} \label{fig:restabs}
\end{table*}

\subsection{Cost of Computing Explanations}

We run \monoxp on a neural network classifier envelope implemented with
COMET for the Auto-MPG dataset, and on a tree ensemble obtained with
monoboost for the PimaMono dataset.
%
(Since the running times of COMET can be significant, this
experiment does not consider a comparison with the heuristic explainer
Anchor~\cite{guestrin-aaai18}. As shown below, Anchor calls
the classifier a large number of times, and that would imply unwieldy
running times.)

\autoref{tab:res01} shows the results of running \monoxp using COMET
as a monotonic envelope on the Auto-MPG dataset, and monoboost on the
PimaMono dataset.
As can be observed, the explanation sizes are in general small, which 
confirms the interpretability of computed AXp's and CXp's.
As a general trend, CXp's are on average smaller than AXp's for
Auto-MPG, but larger than AXp's for PimaMono.
Moreover, the classification time completely dominates the total
running time (i.e.\ resp.~99.99\% and 99.54\% of the time is spent
running the classifier, independently of the classifier used).
These results offer evidence that the time spent on computing
explanations is in general negligible for monotonic classifiers.
For both datasets, and for the instances considered, it was possible
to enumerate all AXp and CXp explanations, with negligible
computational overhead.

\subsection{Comparison with Anchor}

This section compares \monoxp with Anchor, using 
two pairs of classifiers and datasets, i.e.\ a monotonic decision tree
for BankruptcyRisk and monoboost for PimaMono.

%

\autoref{tab:res02} shows the results of running Anchor and \monoxp on
the BankruptcyRisk and the PimaMono datasets.
\monoxp is significantly faster than Anchor (more than 1 order
magnitude in the first case, and more than a factor of 5 in the second
case). 
The justification is the much smaller number of calls to the
classifier required by \monoxp than by Anchor. (While for Anchor the
number of calls to the classifier can be significant, for \monoxp,
each AXp is computed with at most a linear number of calls to the
classifier. Thus, unless the number of features is very substantial,
\monoxp has a clear performance edge over Anchor.)
Somewhat surprisingly, over all instances, the average size of AXp's
computed by \monoxp is smaller than that of Anchor for the
BankruptcyRisk dataset. For the PimaMono dataset, the average size is
almost the same. These results suggest that formally defined
explanations need not be significantly larger than the ones computed
with heuristic approaches.
%
Furthermore, for 64.1\% (resp.\ 18.8\%) of the instances, Anchor
identifies an explanation that does not hold across all points of
feature space, i.e.\ there are points in feature space for which the
explanation of Anchor holds, but the ML model makes a different
prediction\footnote{%
  Similar observations have been reported
  elsewhere~\cite{ignatiev-ijcai20}.}.
Observe that since \monoxp computes all AXp's, we can be certain about
whether the explanation of Anchor is a correct explanation.

\section{Conclusions \& Discussion} \label{sec:conc}

This paper proposes novel algorithms for computing a single PI or
contrastive explanation for a monotonic classifier. In contrast with
earlier work~\cite{darwiche-ijcai18}, the complexity of the proposed
algorithms is polynomial on the number of features and the time it
takes the monotonic classifier to compute its predictions. As the
experiments demonstrate, for simple ML models, the algorithm achieves
one order of magnitude speed up when compared with a well-known
heuristic explainer~\cite{guestrin-aaai18}, achieving better quality
explanations of similar size. In contrast, for complex ML models, the
experiments confirm that the running time is almost entirely spent on
the classifier.
Furthermore, the paper proposes a SAT-based algorithm for enumerating
PI and contrastive explanations. As the experimental results show, the
use of a SAT solver for enumerating PI and contrastive explanations
incurs a negligible overhead.

One possible criticism of the work is that SAT solvers are used for
guiding the enumeration of explanations. This involves solving
an NP-complete decision problem for each computed explanation, and so
it might pose a scalability concern. (One alternative would be to
consider explicit enumeration of candidate explanations, as proposed
in the earlier works on model based
diagnosis~\cite{reiter-aij87,greiner-aij89,wotawa-ipl01}.)
However, for classification problems with tens to hundreds of
features and targeting thousands to tens of thousands explanations
(and this far exceeds currently foreseen scenarios), the use of modern
SAT reasoners (capable of solving problems with hundreds of thousands
of variable and millions of clauses) can hardly be considered a
limitation. 
Another possible criticism of this work is that full monotonicity is 
required. We conjecture that \emph{full} monotonicity is necessary for
tractable explanations (conditioned by the classifier run time
complexity). Addressing partial monotonicity ~\cite{velikova-tnn10} is
a subject of future research.

\paragraph{Acknowledgments.} This work was supported by the AI
Interdisciplinary Institute ANITI, funded by the French program
``Investing for the Future -- PIA3'' under Grant agreement
no.\ ANR-19-PI3A-0004, and by the H2020-ICT38 project COALA
``Cognitive Assisted agile manufacturing for a Labor force supported
by trustworthy Artificial intelligence''.

\bibliographystyle{icml2021}
\input{paper.bibl}

\appendix

\section{Appendix} 

\begin{reptheorem}{thm:exp01}
  Determining the existence of $\lfloor N/2 \rfloor{+}1$ AXp's
  (or CXp's) of a monotonic $N$-feature classifier is NP-complete.
\end{reptheorem}


\begin{proof}
We say that a CNF is trivially satisfiable if some literal
occurs in all clauses. Clearly, SAT restricted to non-trivial
CNFs is still NP-complete. Let $\Phi$ be a not trivially-satisfiable
CNF on variables $x_1,\ldots,x_k$. Let $N=2k$. Let $\tilde{\Phi}$
be identical to $\Phi$ except that each occurrence of a negative literal
$\overline{x_i}$ ($1 \leq i \leq k$) is replaced by $x_{i+k}$.
Thus $\tilde{\Phi}$ is a CNF on $N$ variables each of which occur
only positively. Define the boolean classifier $\kappa$ by
$\kappa(x_1,\ldots,x_N)=1$ if $x_i=x_{i+k}=1$ 
for some $i \in \{1,\ldots,k\}$ or $\tilde{\Phi}(x_1,\ldots,x_N)=1$
(and 0 otherwise). To show that $\kappa$ is monotonic we
need to show that 
$\mbf{a}\le\mbf{b}\Rightarrow\kappa(\mbf{a})\leq\kappa(\mbf{b})$.
This follows by examining the two cases in which $\kappa(\mbf{a})=1$:
if $a_i{=}a_{i+k} \land \mbf{a}{\le}\mbf{b}$, then 
$b_i{=}b_{i+k}$, whereas,
if $\tilde{\Phi}(\mbf{a}){=}1 \land \mbf{a}{\le}\mbf{b}$, then
$\tilde{\Phi}(\mbf{b})=1$ (by positivity of $\tilde{\Phi}$),
so in both cases $\kappa(\mbf{b})=1 {\geq} \kappa(\mbf{a})$.

We first consider AXp's.
Clearly $\kappa(\mbf{1})=1$. There are $N/2$ obvious AXp's
of this prediction, namely $(i,i{+}k)$ ($1{\leq}i{\leq}k$). These are
minimal by the assumption that $\Phi$ is not trivially satisfiable.
Suppose that $\Phi(\mbf{u}){=}1$. Let $\fml{X}_{\mbf{u}}$ be 
$\{i \, | \, 1 {\leq} i {\leq} k \land u_i{=}1\} 
\cup \{i{+}k \, | \,  1 {\leq} i {\leq} k \land u_{i}{=}0\}$. 
Then (some subset of) $\fml{X}_{\mbf{u}}$ is an AXp of the prediction
$\kappa(\mbf{1}){=}1$. The converse also holds. 
Thus, determining whether $\kappa(\mbf{1}){=}1$
has more than $N/2$ AXp's is equivalent to testing the 
satisfiability of $\Phi$. NP-completeness follows from
the fact that $\lfloor N/2 \rfloor{+}1$ AXp's are a
polytime verifiable certificate.

The proof for CXp's is similar. Clearly $\kappa(\mbf{0})=0$.
Again, there are $N/2$ obvious CXp's
of this prediction, namely $(i,i{+}k)$ ($1{\leq}i{\leq}k$)
and (some subset of) $\fml{X}_{\mbf{u}}$ is a CXp 
iff $\Phi(\mbf{u}){=}1$. Thus, determining whether $\kappa(\mbf{0}){=}0$
has more than $N/2$ CXp's is equivalent to testing the 
satisfiability of $\Phi$, from which NP-completeness again follows.
\end{proof}

In the case of AXp's, the theorem follows from a result
on boolean monotone functions~\cite{DBLP:conf/icfca/BabinK11}, 
but for clarity of exposition we preferred to give a direct proof.

\end{document}